\crefname{ALC@unique}{line}{lines}
\newcommand{\ignore}[1]{}
\newcommand{\remove}[1]{}
\newcommand{\bbE}{\mathbb{E}}
\newcommand{\half}{\frac{1}{2}}
\newcommand{\thalf}{\tfrac{1}{2}}
\newcommand{\TV}[2]{\text{TV} [#1,#2]}
\newcommand{\Tmix}{T_{\text{mix}}}
\newcommand{\Tmixexp}{\Tmix^{\pi^E}}
\newcommand{\barTmix}{\bar T_{\text{mix}}}
\newcommand{\wt}{\widetilde}
\DeclareMathOperator*{\argmax}{arg\,max}
\newtheorem{theorem}{Theorem}
\newtheorem{lemma}[theorem]{Lemma}
\newtheorem{corollary}[theorem]{Corollary}
\theoremstyle{definition}
\newtheorem*{remark*}{Remark}
\newtheorem*{theorem*}{Theorem}
\newtheorem*{lemma*}{Lemma}
\crefname{ALC@unique}{line}{lines}
\title{Unknown mixing times in apprenticeship and reinforcement learning}
\author{
Tom Zahavy$^{1,2}$,
Alon Cohen$^{1,2}$,
Haim Kaplan$^{1,3}$ and Yishay Mansour$^{1,3}$\thanks{Supported in part by grant  from the Israel Science Foundation}\\
$\{tomzahavy,aloncohen,haimk,mansour\} @google.com $\\
$^1$ Google AI, Tel Aviv\\
$^2$ Technion, Israel Institute of Technology\\
$^3$ Tel Aviv University \\
}
\begin{document}

\maketitle

\begin{abstract}
We derive and analyze learning algorithms for apprenticeship learning, policy evaluation, and policy gradient for average reward criteria. Existing algorithms explicitly require an upper bound on the mixing time. In contrast, we build on ideas from Markov chain theory and derive sampling algorithms that do not require such an upper bound. For these algorithms, we provide theoretical bounds on their sample-complexity and running time.
\end{abstract}

\section{INTRODUCTION}

Reinforcement Learning (RL) is an area of machine learning concerned with how agents learn long-term interactions with their environment \citep{sutton2018reinforcement}. 
The agent and the environment are modeled as a Markov Decision Process (MDP). The agent's goal is to determine a policy that maximizes her cumulative reward.
Much of the research in RL focuses on episodic or finite-horizon tasks.
When studying infinite-horizon tasks, the standard approach is to discount future rewards. Discounting serves two purposes: first, it makes the cumulative reward bounded; second, in some domains, such as economics, discounting is used to represent  ``interest'' earned on rewards. Thus an action that generates an immediate reward is preferable over one that generates the same reward in the future. Nevertheless, discounting is unsuitable in general domains. Alternatively, it is common to maximize the expected reward received in the steady-state of the Markov chain defined by the agent's policy. This is the case in many control problems: elevators, drones, climate control, etc. \citep{bertsekas2005dynamic} as well as many sequential decision-making problems such as inventory-management \citep{arrow1958studies} and queuing \citep{kelly1975networks}.

\cite{blackwell1962discrete} pioneered the study of
MDPs with average-reward criteria. He showed that the optimal
policy for the average reward is the limit of the sequence of optimal policies for discounted reward as the discount factor converges to 1.
However, it has been established that it is computationally hard to find the optimal policy when the discount factor is close to $1$.
For these reasons, Dynamic Programming (DP) algorithms were developed for average-reward criteria 
(see 
\citealp{mahadevan1996average,puterman2014markov}; for detailed surveys). 
\citet{howard1964dynamic} 
introduced the policy-iteration algorithm. Value iteration was later proposed by 
\citet{white1963dynamic}. However, these algorithms require knowledge of the state transition probabilities and are also computationally intractable.

The main challenge in deriving RL algorithms for average-reward MDPs is calculating the stationary distribution of the Markov chain induced by a given policy. This is a necessary step in evaluating the average reward of the policy. 
When the transition probabilities are known, the stationary distribution can be obtained by solving a system of linear equations. 
In the reinforcement learning setup, however, the dynamics are unknown, and practitioners tend to ``run the simulation for a sufficiently long time to obtain a good estimate'' \citep{gosavi2003simulation}. This implicitly implies that the learner knows a bound on the mixing time $\Tmix$ of the Markov chain. Indeed, model-free algorithms for the average reward with theoretical guarantees (e.g., \citealp{wang2017primal,chen2018scalable}) require an upper bound on the mixing time as an input. 

We will later see that \textbf{not knowing such a bound on the mixing time comes with an additional cost} and requires $O(\Tmix|S|)$ samples to get a single sample from the stationary distribution. Instead, one might consider learning the transition probability matrix and use it for computing the stationary distribution (i.e., model-based RL). Not surprisingly, model-based algorithms assume that the mixing time, or an upper bound on it, are known explicitly \citep{kearns2000approximate,brafman2002r}.\footnote{UCRL2 \citep{jaksch2010near} avoids using the mixing time but instead assumes knowledge of the MDP diameter (which is implicitly related to the mixing time) to guarantee an $\epsilon-$optimal policy. We emphasize that there is no need to know a bound on the diameter in the regret setting, but only when the goal is to learn an $\epsilon$-optimal policy. In this case, the learner has to know a bound on the diameter in order to bound the sample complexity.}
Moreover, even if we estimate the transition probabilities, it is not clear how to use it to obtain the average reward. In particular, the stationary distribution in the estimated model is not guaranteed to be close to the stationary distribution of the true model; an equivalent of the simulation lemma \citep{kearns2002near} for this setup does not exist. To illustrate the difficulty, consider a periodic Markov chain with states ordered in a (deterministic) cycle. Also consider a similar Markov chain, but with a probability of $\epsilon$ to remain in each state. Even though the two models are ``close'' to each other, the latter chain is ergodic and does have a stationary distribution while the former chain is periodic and therefore does not have a stationary distribution {\bf at all}.

Alternatively, one may consider estimating the mixing time (or an upper bound on it) directly, in order to use it to get samples from the stationary distribution. There are two drawbacks to this approach. First, the sample complexity for estimating the mixing time is quite significant. 
For an arbitrary ergodic Markov chain, it is possible to estimate an upper and a lower bound on the mixing time by approximating the pseudo-spectral gap and the minimal stationary probability $\pi_\star$ \citep[Theorems 12.3 and 12.4]{levin2017markov}, and estimating these quantities to within a relative error of $\epsilon$ requires $O\bigl(\Tmix^2\max\{\Tmix,\lvert S \rvert / \pi_\star\}/\epsilon^2\pi_\star\bigr)$ samples \citep{wolfer2019estimating}. 
Second, these techniques can be used to get an upper bound on the mixing time of a single policy, and not on the maximum of all the deterministic policies in an MDP. For these reasons we focus on algorithms that avoid estimating the mixing time directly.

In this work, we build on Coupling From the Past (CFTP) -- a technique from Markov chain theory that obtains unbiased samples from a Markov chain's stationary distribution \citep{propp1996exact,propp1998get}. These samples are generated \emph{without any prior knowledge on the mixing time of the Markov chain}. Intuitively, CFTP starts $|S|$ parallel simulations of the Markov chain, one from each state, at minus infinity. 
When two simulations reach the same state, they continue together as one simulation. The simulations coalesce at time zero to a single sample state, which is distributed exactly as the stationary distribution. CFTP, rather than starting at minus infinity, starts at zero and generates suffixes of increasing length of this infinite simulation until it can identify the state at which all simulations coalesce. 
The simulations are shown to coalesce, in expectation, after $O \big( \lvert S \rvert \Tmix \big)$-time. In \cref{sec:propp-wilson}, we provide an alternative, simple proof of the coalescence-time of the CFTP procedure and a matching lower bound. Additionally, we analyze the time it takes for two simulations to coalesce and show how to use this process to estimate differences of $Q$-values.

We further describe sampling-based RL algorithms for the average-reward criteria that utilize these ideas. The main advantage of our algorithms is that they do not require a bound on the associated Markov chain's mixing time. In \cref{sec:AL}, we consider apprenticeship learning and propose two sampling mechanisms to evaluate the game matrix and analyze their sample complexity. These are: using the CFTP protocol to estimate the game matrix directly at the beginning of the algorithm; querying the expert, at each step, for two trajectories to provide an unbiased estimate of the game matrix.  We also include an unbiased estimator of the policy gradient (under average reward criteria) and analyze its sample complexity. Finally, in the supplementary material (\cref{sec:PE}), we use CFTP to propose a sample-efficient data structure that allows us to get an unbiased sample from the stationary distribution of any policy in an MDP.

\subsection{PRELIMINARIES}
\label{sec:background}
In this section, we provide background on RL with \textbf{average reward criteria} (based on \citealp{puterman2014markov}), as well as on the \textbf{CFTP} algorithm \citep{propp1996exact,propp1998get} for getting unbiased samples from a stationary distribution of a Markov chain without knowing its mixing time. Background on apprenticeship learning is provided in the relevant section. 

A Markov Decision Process (MDP) consists of a set of states $S$, and a set of actions $A$. We assume that $S$ and $A$ are finite.
Associated with each action $a \in A$ is a state transition matrix $P^a$, where $P^a(x,y)$ represents the probability of moving from state $x$ to $y$ under action $a$. There is also a {\em stochastic reward} function $R : S \times A \mapsto \mathbb{R}$  where  $r(x, a)=\mathbb{E}[R(s,a)]$ is the expected reward when performing action $a$ in state $x$. A stationary deterministic policy is a mapping $\pi : S \mapsto A$ from states to actions.  Any policy induces a state transition matrix $P^\pi$, where $P^\pi(x,y) = P^{\pi(x)}(x,y)$. Thus, any policy yields a Markov chain $(S, P^\pi)$. The \textbf{stationary distribution} $\mu$ of a Markov chain with transition matrix $P$ is defined to be the probability distribution satisfying  $\mu^\top=\mu^\top P$.

We specifically study {\em ergodic MDPs} in which any policy induces an ergodic Markov chain. That is, a Markov chain which is irreducible and aperiodic \citep{levin2017markov}. Such a Markov chain converges to a unique stationary distribution independent of the starting state (for generalizations to unichain MDPs, see \citealp{puterman2014markov}). The \textbf{average reward} $\rho(\pi)$ ssociated with a particular policy $\pi$ is defined as
$\rho(\pi) = \mathbb{E}_{x \sim \mu(\pi)} \, r(x,\pi(x))$ where $\mu(\pi)$ is the stationary distribution of the Markov chain induced by $\pi$. The {\em optimal policy} is one that maximizes the average reward. The \textbf{$Q$-value} of a state-action pair given a policy $\pi$ is defined as 
\begin{equation} \label{eq:qfunctiondef}
    Q^\pi (s,a) 
    = 
    \sum_{t=0}^\infty \mathbb{E} \left\{r_t - \rho(\pi) \mid s_0 = s,a_0 = a,\pi \right\}. 
\end{equation}
    
We define the \textbf{total-variation distance} for two probability measures $P$ and $Q$ on a sample space $\Omega$ to be $\TV{P}{Q} = \sup_{A\subseteq \Omega}\lvert P(A)-Q(A)\rvert$ (which is equivalent to the $L1$ distance). Informally, this is the largest possible difference between the probabilities that the two distributions assign to the same event. 

The \textbf{mixing time} of an ergodic Markov chain with a stationary distribution $\mu$ is the smallest $t$ such that $\forall x_0,\ \TV{\Pr_t(\cdot|x_0)}{\mu} \le 1/8$, where $\Pr_t(\cdot|x_0)$ is the distribution over states after $t$ steps, starting from $x_0$. For MDP $M$, let $\Tmix^\pi$ be the mixing time of the Markov chain which $\pi$ induces in $M$, i.e., $(S,P^\pi).$
The \textbf{MDP mixing time}, $\barTmix=\max_{\pi \in \Pi} \Tmix^\pi$ is the maximal mixing time for any deterministic policy. 

The algorithms presented in this paper rely on access to a \textbf{generative model} \citep{kearns2002near}; an oracle that accepts a state-action pair $(s,a)$ and outputs a state $s'$ that is drawn from the next-state distribution $P^a(s,\cdot)$, and a sample from the reward distribution $R(s,a)$. We further assume that a sample is generated in unit time, and measure the \emph{sample complexity} of an algorithm by the number of calls it makes to the generative model.

\begin{figure*}[h]
\begin{minipage}{\textwidth}
    \begin{minipage}[b]{0.3\textwidth}
        \centering
        \includegraphics[width=\linewidth]{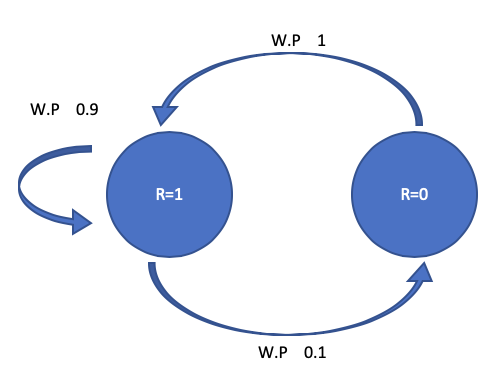}
        \captionof{figure}{Markov chain}\label{fig:mc}
    \end{minipage}
    \begin{minipage}[b]{0.34\textwidth}
        \centering
        \includegraphics[width=\linewidth]{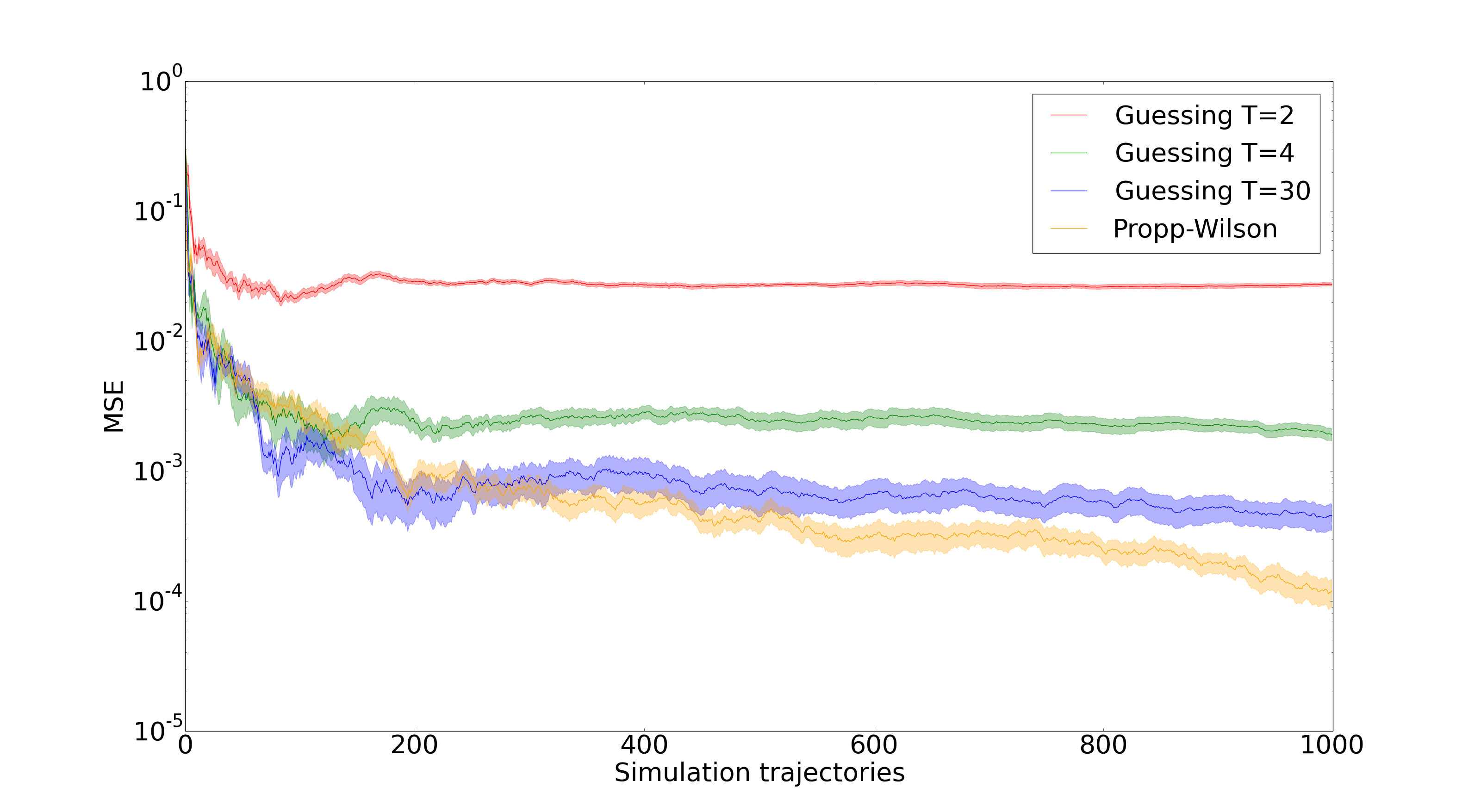}
        \captionof{figure}{MSE vs. runs }\label{fig:mse_traj}
    \end{minipage}
    \begin{minipage}[b]{0.34\textwidth}
        \centering
        \includegraphics[width=\linewidth]{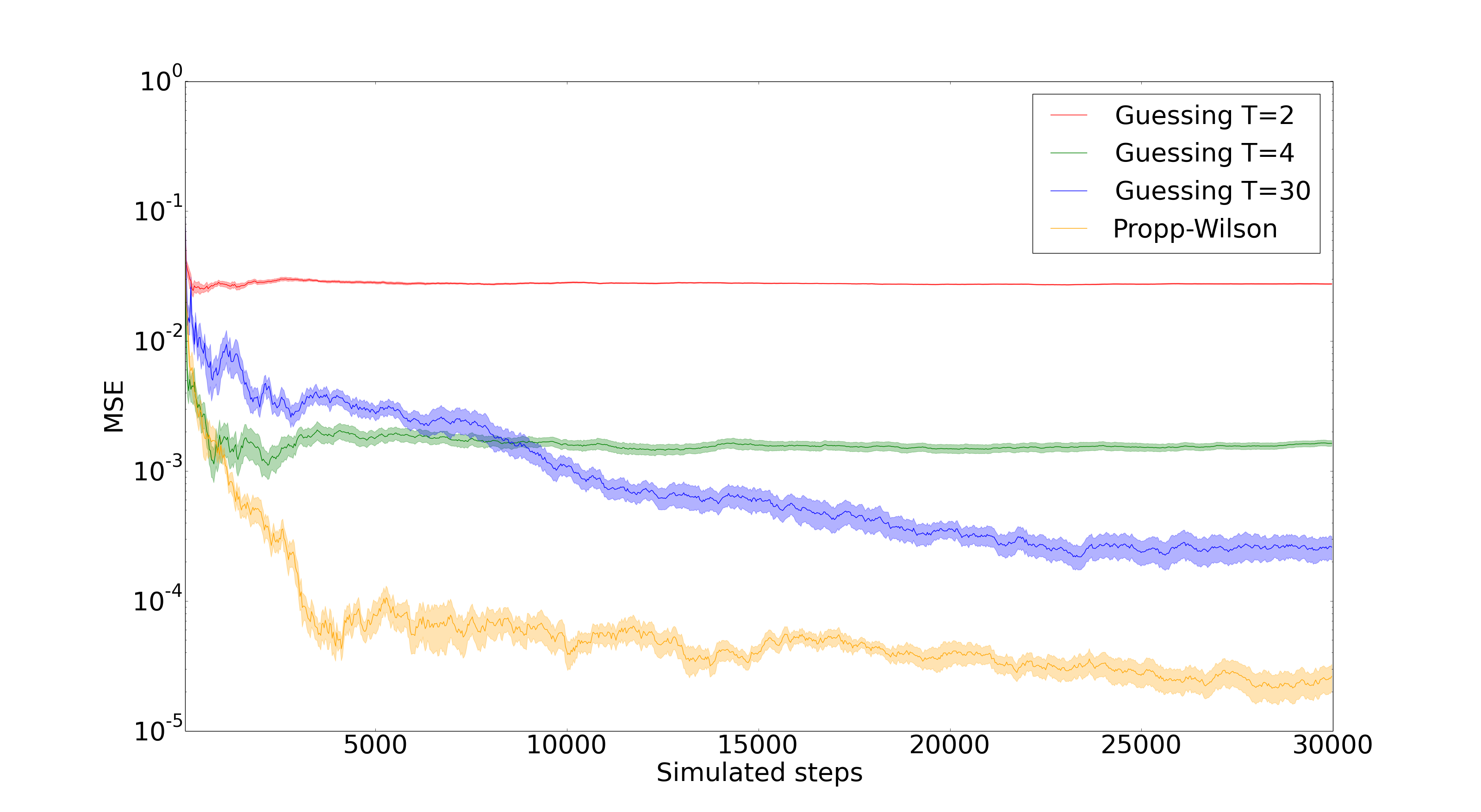}
        \captionof{figure}{MSE vs. steps}\label{fig:mse_steps}
    \end{minipage}
\end{minipage}
\end{figure*}

\textbf{Coupling From the Past (CFTP)} is a method for sampling from the stationary distribution of a Markov chain \citep{propp1996exact,propp1998get}. 
Contrary to many Markov Chain Monte-Carlo algorithms, Coupling from the past gives a perfect sample from the stationary distribution. 
Intuitively, CFTP starts $|S|$ parallel simulations of the Markov chain, one from each state, at minus infinity. 
When two simulations reach the same state, they continue together as one simulation.
The simulations coalesce at time zero to a single sample state, which is distributed exactly as the stationary distribution. CFTP rather than starting at minus infinity starts at zero and
generates suffixes of increasing length of this infinite simulation until it can identify the
state at which all simulations coalesce \citep{haggstrom2002finite}.

\begin{algorithm}[h]
\caption{Coupling from the past}
\begin{algorithmic}
\STATE $F_0\leftarrow \text{Identity Map}$, $t \leftarrow (-1)$
\REPEAT
    \STATE $t\leftarrow t+1$
    \STATE $f_{-(t+1)}\leftarrow \text{RandomMap}(P)$
    \STATE $F_{-(t+1)}\leftarrow F_{-t} \circ f_{-(t+1)}$
\UNTIL{$F_{-(t+1)}$ is constant}
\STATE Return the value into which $F_{-(t+1)}$ coalesces
\end{algorithmic}
\label{alg1}
\end{algorithm}

Consider a finite state ergodic Markov chain $M$ with state space $S$, a 
transition probability matrix $P$.
CFTP generates a sequence of mappings $F_0,F_{-1},F_{-2},\ldots$ each from $S$ to $S$, until 
the first of these mappings, say $F_{-t}$ is constant,  sending all states into the same one.
In other words, $F_{-t}$ defines  simulations from every starting state, that 
{\em coalesce} into a single state after $t$ steps.
Initially $F_0(s) = s$ for every $s\in S$. Then we generate $F_{-(t+1)}$ by drawing a random map $f_{-(t+1)} : S \mapsto S$ (denoted by RandomMap(P)) where we pick
$f_{-(t+1)}(s)$ from the next state distribution $P(s,\cdot)$ (e.g., the Markov chain dynamics) for every $s$, and compose $f_{-(t+1)}(s)$ with $F_{-t}$. 

\begin{theorem} \label{thm:cftp}
With probability 1, the CFTP protocol returns a value, which is distributed according to the Markov chain's stationary distribution. 
\end{theorem}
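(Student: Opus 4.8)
The plan is to prove the two assertions separately: first that the protocol halts almost surely, and then that its output has law $\mu$.

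For termination I would track the image $R_t = F_{-t}(S)$. Since $F_{-(t+1)} = F_{-t}\circ f_{-(t+1)}$, we get $R_{t+1} = F_{-t}\big(f_{-(t+1)}(S)\big) \subseteq F_{-t}(S) = R_t$, so $|R_t|$ is non-increasing and the protocol stops precisely when $|R_t| = 1$. The quantitative input is ergodicity: irreducibility and aperiodicity give an $n_0$ with $P^{n_0}(x,y) > 0$ for all $x,y$. I would use this to exhibit a positive-probability realization of $n_0$ consecutive random maps whose composition is constant, by constructing a family of coalescing length-$n_0$ paths into a fixed target $y^*$ (merging any two paths once they meet, and extending arbitrarily along positive-probability transitions elsewhere); each map is realized with positive probability because the per-state draws are independent. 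Writing $H = f_{-(t+1)}\circ\cdots\circ f_{-(t+n_0)}$, if $H$ is constant then so is $F_{-(t+n_0)} = F_{-t}\circ H$. Since successive length-$n_0$ blocks are i.i.d.\ and each is constant with probability at least some $p>0$, the probability of not having coalesced after $k$ blocks is at most $(1-p)^k \to 0$. Hence $\Pr(T > t)\to 0$, where $T$ is the random coalescence time, which gives termination with probability $1$.

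For correctness, the central observation is that once $F_{-T}$ is the constant map with value $v$, every further extension into the past is unchanged: for all $t \ge T$ we have $F_{-t} = F_{-T}\circ(\cdots)$, still the constant map $v$, so the output $v$ does \emph{not} depend on $t$. Now fix $t$ and let $X_{-t}\sim\mu$ be drawn independently of all the maps. Setting $X_{-(i-1)} = f_{-i}(X_{-i})$ yields a genuine length-$t$ trajectory of the chain started from $\mu$, so by stationarity its endpoint $W := F_{-t}(X_{-t}) = X_0$ satisfies $W \sim \mu$ for every fixed $t$. On the event $\{T \le t\}$ the map $F_{-t}$ is constant, whence $W = F_{-t}(X_{-t}) = v$; thus $v$ and $W$ can differ only on $\{T > t\}$, and for every state $y$,
\[
\big|\Pr(v = y) - \mu(y)\big| = \big|\Pr(v=y) - \Pr(W=y)\big| \le \Pr(v \ne W) \le \Pr(T > t).
\]
Letting $t\to\infty$ and using $\Pr(T>t)\to 0$ from the first part, while noting the left-hand side is independent of $t$, forces $\Pr(v = y) = \mu(y)$ for all $y$, i.e.\ $v \sim \mu$.

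The step I expect to be the main obstacle is getting the direction of composition right and seeing why it matters. The whole argument hinges on the ``from the past'' structure: the output stabilizes as $t\to\infty$ because new maps are \emph{prepended} (composed on the right via $F_{-(t+1)} = F_{-t}\circ f_{-(t+1)}$), which is exactly what makes $v$ independent of $t$ and lets me compare it against a stationary-started trajectory for arbitrarily large $t$. Forward coupling --- composing on the left and stopping the first time the forward images coalesce --- lacks this property; there the coalescence value is correlated with the stopping rule and yields a biased sample, as in the cited Example~10.1. The termination argument is comparatively routine, the only care needed being the construction of a single positive-probability coalescing block of maps out of ergodicity, rather than attempting to coalesce in a single step.
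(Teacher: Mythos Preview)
Your argument is correct and is essentially the classical Propp--Wilson proof: almost-sure termination via an ergodicity-based positive-probability ``coalescing block'' of maps together with i.i.d.\ blocks, and exactness via the backward-coupling comparison with a stationary chain started at time $-t$, letting $t\to\infty$. The paper itself does not give a proof of this theorem; it simply writes ``See \citet{propp1996exact} for proof,'' so you have supplied precisely the argument the paper defers to. There is nothing to compare beyond noting that your write-up matches the standard proof in the cited source.
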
 

See \citet{propp1996exact} for proof. Additionally, Theorem 5 in \citet{propp1998get} states that the expected value of $t$ when $F_{-t}$ coalesces is $O \big(\Tmix \lvert S \rvert \big)$.
The straightforward implementation of \cref{alg1} takes $O(\lvert S \rvert)$ time per
step for a total of $O \big(\Tmix \lvert S \rvert^2 \big)$ time.
\citet{propp1996exact} also give a cleverer implementation that takes $O \big( \Tmix \lvert S \rvert \log \lvert S \rvert \big)$ time. It uses the fact that coalescence occurs gradually and reduces the number of independent simulations as time progresses.

\subsection{EXAMPLE}

We finish this section with a motivating simulation, where we compare the CFTP procedure with a common practice of “guessing” the mixing time and running the chain for that time. While CFTP does not suffer from bias at all, the baseline methods do suffer from bias and are shown to produce errors in estimating the average reward. If the guess is too large, then these methods are highly sample-inefficient
compared to CFTP. 

Explicitly, consider the Markov reward process in \cref{fig:mc}. The initial state distribution $\mu_0$ is given by $\mu_0= (0,1)$ (starts from the right state). 
The stationary distribution is $(\frac{2}{3}, \frac{1}{3})$, the average reward is $\frac{2}{3},$ the expected coalescence time is $2$, and the mixing time is $4$.
For this chain, simulating from time $0$ forward until all chains coalesce gives a biased sample, as coalescence can only occur in the left state. 

We implemented the CFTP procedure\footnote{Accompanying code can be found in the supplementary material.} and compared it with a baseline that ``guesses'' the mixing time, $T_{\text{guess}}$. This baseline operates as follows. First, it samples an initial state from $\mu_0.$ Then, it simulates the chain for $T_{\text{guess}}$ steps. Finally, it returns the reward at the resulting state. 

We run each algorithm to obtain a sample from the stationary distribution, and use the average of these samples to estimate the average reward. For each algorithm, we report the average (over $10$ runs, reported alongside error bars) Mean Squared Error (MSE) with respect to the average reward as a function of the number of runs taken (\cref{fig:mse_traj}). We can see that underestimating the mixing time $T_{\text{guess}}=2$ (red), and using precisely the mixing time $T_{\text{guess}}=4$ (green) leads to bias in the estimation of the average reward. The latter is due to the fact that by the definition of the mixing time, we are not guaranteed to sample exactly from the stationary distribution, but only from a distribution that is close it in the total variation distance. When we overestimate the mixing time, e.g., for $T_{\text{guess}}=30$ (blue), the bias decreases significantly. Similarly, we can see that CFTP (orange) produces unbiased samples as expected. 

The advantage of CFTP (orange) becomes clearer when inspecting \cref{fig:mse_steps}. We can see the MSE as a function of the number of simulation steps. Overestimating the mixing time $T_{\text{guess}}=30$ (blue), still gives unbiased estimates but uses too many simulation steps to achieve a single sample of the reward. As a result, CFTP (orange) yields much lower MSE for the same amount of samples.

\section{SAMPLING FROM A STATIONARY DISTRIBUTION WITH UNKNOWN MIXING TIME}
\label{sec:propp-wilson}

\subsection{COALESCENCE FROM TWO STATES}
We begin this section by analyzing a simple scenario: for a Markov chain with $|S|$ states, we simultaneously start two simulations from two different states. At each time step, each simulation proceeds according to the next state distribution of the Markov chain. We are interested in bounding the time that it takes for \textbf{two} simulations to reach the same state. As we will see, this takes $O(\Tmix |S|)$ time in expectation. We begin with the following two lemmas. 
\begin{lemma}
\label{lem:closedistributionscoalesce}
Let $P$ and $Q$ be distributions on $\{1,\ldots,|S|\}$ such that $\TV{P}{Q} \le \frac{1}{4}$.
Draw $x$ from $P$ and $y$ from $Q$ independently.
Then $\Pr[x=y] \ge \tfrac{1}{2|S|}$.
\end{lemma}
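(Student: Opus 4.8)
The plan is to lower-bound the collision probability $\Pr[x=y] = \sum_{i} P(i)\,Q(i)$ by passing through the \emph{overlap} between $P$ and $Q$, defined coordinatewise as $m_i := \min\{P(i), Q(i)\}$. The crucial identity linking the overlap to total variation is $\sum_i m_i = 1 - \TV{P}{Q}$, which follows from $\min\{a,b\} = \tfrac{1}{2}(a + b - |a-b|)$ summed over $i$ together with the (equivalent $L_1$) characterization $\TV{P}{Q} = \tfrac12 \sum_i |P(i) - Q(i)|$. Under the hypothesis $\TV{P}{Q} \le 1/4$ this immediately gives $\sum_i m_i \ge 3/4$.

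Next I would bound each term of the collision probability from below by the squared overlap. Since $\max\{P(i),Q(i)\} \ge \min\{P(i),Q(i)\}$, we have $P(i)\,Q(i) = \min\{P(i),Q(i)\}\cdot\max\{P(i),Q(i)\} \ge m_i^2$, and summing yields $\Pr[x=y] \ge \sum_i m_i^2$. This replaces the two independent distributions by a single nonnegative vector $(m_i)$ whose total mass we already control.

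Finally, to convert the lower bound on $\sum_i m_i$ into one on $\sum_i m_i^2$ I would apply Cauchy--Schwarz (equivalently the power-mean inequality) across the $|S|$ coordinates: $\sum_i m_i^2 \ge \frac{1}{|S|}\bigl(\sum_i m_i\bigr)^2 \ge \frac{(3/4)^2}{|S|} = \frac{9}{16\,|S|} \ge \frac{1}{2|S|}$, which is exactly the claim.

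I do not expect a serious obstacle. The only nonobvious decision is recognizing that the right intermediate quantity is the overlap $\min\{P(i),Q(i)\}$ rather than $P$ and $Q$ themselves: the overlap is precisely what total variation controls from below, and its $\ell_2$ mass lower-bounds the collision probability. The resulting constant $9/16$ comfortably exceeds the target $1/2$, so the argument carries slack and the precise threshold $1/4$ in the hypothesis is not tight for the stated conclusion.
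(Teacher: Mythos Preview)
Your proof is correct and is essentially the same as the paper's: the paper also bounds $P(i)Q(i) \ge \min\{P(i),Q(i)\}^2$ (written as $Q(i)^2$ on $B=\{i:P(i)>Q(i)\}$ and $P(i)^2$ on $B^c$), applies Cauchy--Schwarz, and uses $Q(B)+P(B^c)=\sum_i m_i = 1-\TV{P}{Q}\ge 3/4$. Your phrasing via the overlap $m_i$ is arguably cleaner, but the argument is identical.
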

\begin{proof}
Let $B = \{i \in \{1,\ldots,|S|\} : P(i) > Q(i)\}$ and note that, by definition of the total variation distance, $P(B) - Q(B) = \TV{P}{Q} \le 1/4$.
We have that
\begin{align*}
    \Pr[x = y]
    &=
    \sum_{i=1}^{|S|} P(i) Q(i) \\
    & \ge
    \sum_{i \in B} Q(i)^2
    +
    \sum_{i \in B^c} P(i)^2  \\
    & \ge
    \frac{\big(Q(B) + P(B^c) \big)^2}{|S|} \tag{Cauchy-Schwartz}~.
\end{align*}
The proof is completed by noticing that 
$
    Q(B) + P(B^c) 
    = 
    1 - \big( P(B) - Q(B) \big)
    \ge 3/4~. 
$
\end{proof}

\begin{lemma}
\label{lem:coalescenceaftermixing}
Let $i$ and $j$ be two states of an ergodic Markov chain of $|S|$ states.
Let $x$ be the state reached after making $\Tmix$ steps starting from $i$, and let $y$ be the state reached after making $\Tmix$ steps starting from $j$.
Then $\Pr[x=y] \ge \frac{1}{2|S|}$.
\end{lemma}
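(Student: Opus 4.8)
The plan is to reduce this lemma to \cref{lem:closedistributionscoalesce} by controlling the total-variation distance between the two $\Tmix$-step distributions via the stationary distribution. Let $P$ denote the distribution of $x$, i.e.\ the law of the state reached after $\Tmix$ steps starting from $i$, and let $Q$ denote the distribution of $y$, the law of the state reached after $\Tmix$ steps starting from $j$. The crucial observation is that $x$ and $y$ are generated by two independent runs of the chain (from different starting states), so they are independent draws from $P$ and $Q$ respectively, which is exactly the setting required by \cref{lem:closedistributionscoalesce}.

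First I would invoke the definition of mixing time (\cref{mixing_time}): since the chain is ergodic with stationary distribution $\mu$ and we have taken exactly $\Tmix$ steps, both distributions are close to $\mu$, namely $\TV{P}{\mu} \le 1/8$ and $\TV{Q}{\mu} \le 1/8$. Next I would apply the triangle inequality for total-variation distance to combine these:
\[
    \TV{P}{Q} \le \TV{P}{\mu} + \TV{\mu}{Q} \le \tfrac{1}{8} + \tfrac{1}{8} = \tfrac{1}{4}.
\]
This places us precisely in the hypothesis of \cref{lem:closedistributionscoalesce}.

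Finally, applying \cref{lem:closedistributionscoalesce} to the pair $(P,Q)$ with independent draws $x \sim P$ and $y \sim Q$ yields $\Pr[x=y] \ge \tfrac{1}{2|S|}$, which is the claim. In this argument there is no genuinely hard step; the only points requiring care are (i) making explicit that the two simulations are run independently, so that the joint law of $(x,y)$ factorizes and the collision-probability bound of the previous lemma applies verbatim, and (ii) checking that the threshold $1/8$ in the mixing-time definition is chosen exactly so that the triangle inequality lands at $1/4$, matching the hypothesis of \cref{lem:closedistributionscoalesce}. The mild ``obstacle,'' if any, is simply confirming that the constants propagate correctly through the triangle inequality.
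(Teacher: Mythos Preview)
Your proposal is correct and matches the paper's proof essentially step for step: define $P$ and $Q$ as the $\Tmix$-step distributions from $i$ and $j$, use the mixing-time definition to get $\TV{P}{\mu},\TV{Q}{\mu}\le 1/8$, combine via the triangle inequality to obtain $\TV{P}{Q}\le 1/4$, and then invoke \cref{lem:closedistributionscoalesce}. If anything, you are slightly more explicit about the independence of the two simulations and the role of the triangle inequality than the paper is.
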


\begin{proof}
Let $P$ be the distribution on states after making $\Tmix$ steps starting from $i$, and let $Q$ be the distribution on states after making $\Tmix$ steps starting from $j$. Let $\mu$ be the stationary distribution of the Markov chain. Then by the definition of $\Tmix$, we have that $\TV{P}{\mu} \le 1/8$ and $\TV{Q}{\mu} \le 1/8$.
Therefore, $\TV{P}{Q} \le 1/4$, and by \cref{lem:closedistributionscoalesce} we have $\Pr[x=y] \ge \frac{1}{2|S|}$.
\end{proof}

By repeating the argument of the previous Lemma, we arrive at the following conclusion.

\begin{theorem}
\label{lem:two_traj}
Let $i$ and $j$ be two states of an ergodic Markov chain on $|S|$ states.
Suppose that two chains are run simultaneously; one starting from $i$ and the other from $j$. 
Let $T_c$ be the first time in which the chains coalesce. 
Then $T_c \le 2 \lvert S \rvert \Tmix \log(1/\delta)$ with probability at least $1-\delta$.
Moreover, 
$
    \bbE [ T_c ]
    \le 
    2 \lvert S \rvert \Tmix
$.
\end{theorem}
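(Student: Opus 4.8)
The plan is to discretize time into consecutive blocks of length $\Tmix$ and apply \cref{lem:coalescenceaftermixing} block by block, using the Markov property to chain the per-block coalescence probabilities together. First I would dispose of the trivial case $i=j$, for which $T_c=0$ and both claims hold immediately, and assume $i\neq j$. Let $X_t$ and $Y_t$ denote the two independently run chains, and for each integer $k\ge 0$ let $A_k$ be the event that the chains have not yet met at any time in $\{0,1,\dots,k\Tmix\}$; thus $\{T_c>k\Tmix\}=A_k$, and $A_0$ holds almost surely since $i\neq j$.

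The key step is to show $\Pr[A_k\mid A_{k-1}]\le 1-\tfrac{1}{2|S|}$ for every $k\ge 1$. Conditioned on $A_{k-1}$, the two chains sit at some pair of \emph{distinct} states $(a,b)$ at time $(k-1)\Tmix$. By the Markov property, their evolution over the next $\Tmix$ steps is distributionally identical to two fresh independent runs started from $a$ and $b$, so \cref{lem:coalescenceaftermixing} gives $\Pr[X_{k\Tmix}=Y_{k\Tmix}\mid X_{(k-1)\Tmix}=a,\,Y_{(k-1)\Tmix}=b]\ge \tfrac{1}{2|S|}$. Since meeting at time $k\Tmix$ already implies coalescence by that time (i.e.\ the complement of $A_k$), averaging over the conditioning states yields the per-block bound. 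Multiplying these bounds gives
\[
    \Pr[T_c>\ell\Tmix]=\Pr[A_\ell]\le\Big(1-\tfrac{1}{2|S|}\Big)^{\ell}\le\exp\!\Big(-\tfrac{\ell}{2|S|}\Big),
\]
where the final inequality is $1-x\le e^{-x}$.

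For the expectation I would write $\bbE[T_c]=\sum_{t\ge 0}\Pr[T_c>t]$ and group the terms into blocks of length $\Tmix$. Since $t\mapsto\Pr[T_c>t]$ is non-increasing, each block of $\Tmix$ terms is bounded by $\Tmix\cdot\Pr[T_c>\ell\Tmix]\le\Tmix(1-\tfrac{1}{2|S|})^{\ell}$, and summing the resulting geometric series gives $\bbE[T_c]\le 2\Tmix\cdot|S|$.

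The main obstacle is the conditioning argument in the second paragraph: \cref{lem:coalescenceaftermixing} is stated for fixed deterministic starting states, whereas at the start of each block the chains sit at \emph{random} states. The reason the chaining is legitimate is that the bound $\tfrac{1}{2|S|}$ holds \emph{uniformly} over all pairs of distinct starting states, so conditioning both on $A_{k-1}$ and on the particular states reached does not weaken it; the Markov property then lets the per-block failure probabilities multiply. Everything else---the case split, the $1-x\le e^{-x}$ estimate, and the geometric sum---is routine.
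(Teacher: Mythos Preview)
Your proposal is correct and follows essentially the same argument as the paper: partition time into blocks of length $\Tmix$, apply \cref{lem:coalescenceaftermixing} together with the Markov property to get a per-block failure probability of at most $1-\tfrac{1}{2|S|}$, multiply, and then derive the expectation bound via $\bbE[T_c]=\sum_{t\ge 0}\Pr[T_c>t]$ and a geometric sum. Your treatment of the conditioning (uniformity of the $\tfrac{1}{2|S|}$ bound over all starting pairs, plus the Markov property) is exactly the point the paper highlights in its sketch.
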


\begin{proof}
Let us start by sketching the proof idea.
We break time into multiples of $\Tmix$. We show that the probability that the chains do not coalesce after $\ell$ such time-multiples is at most $\left(1-\frac{1}{2|S|}\right)^\ell$.

Denote by $x_t$ and $y_t$ be the states of the two chains at time $t$.
For any $t$ and two states $i',j'$, denote by $E_{t,i',j'}$ be the event that $x_t = i'$ and $y_t = j'$.
By the Markov property and by \cref{lem:coalescenceaftermixing},
\[
    \Pr[x_{t+\Tmix} = y_{t+\Tmix} \; \vert \; E_{t,i',j'}] 
    \ge 
    \frac{1}{2|S|}~.
\]
Using the Markov property again, for any $\ell \ge 1$:
\begin{align*}
    \Pr&\Big[T_c > \ell \cdot \Tmix \mid T_c > (\ell-1)\Tmix\Big] \\
    & \le
    \Pr[x_{\ell \cdot \Tmix} \neq y_{\ell \cdot \Tmix} \mid T_c > (\ell-1)\Tmix] \\
    & =
    \sum_{i' \neq j'} 
    \Pr
    \Big[x_{\ell \cdot \Tmix} \neq y_{\ell \cdot \Tmix} \mid T_c > (\ell-1)\Tmix, \\
    &\qquad\qquad E_{(\ell-1)\Tmix,i',j'}\Big]
    \cdot
    \Pr[ E_{(\ell-1)\Tmix,i',j'}] \\
    & =
    \sum_{i' \neq j'} 
    \Pr[x_{\ell \cdot \Tmix}\neq y_{\ell \cdot \Tmix} \mid E_{(\ell-1)\Tmix,i',j'}]
    \\ & \qquad\qquad \cdot 
    \Pr[ E_{(\ell-1)\Tmix,i',j'}] \\
    & \le
    \sum_{i' \neq j'} 
    \bigg(1-\frac{1}{2|S|} \bigg)
    \cdot
    \Pr[ E_{(\ell-1)\Tmix,i',j'}] 
    \le
    1-\frac{1}{2|S|}~.
\end{align*}
Therefore,
\begin{align*}
    \Pr&[T_c > \ell \cdot \Tmix] \\
    &=
    \Pr[T_c > \ell \cdot \Tmix \mid T_c > (\ell-1)\Tmix]
     \\
    & \qquad \cdot \Pr[T_c > (\ell-1) \Tmix] \\
    &\le
    \left(1-\frac{1}{2|S|}\right) \Pr[T_c > (\ell-1) \Tmix]~,
\end{align*}
and inductively $\Pr[T_c > \ell \cdot \Tmix] \le \left(1-\frac{1}{2|S|}\right)^\ell$.

The high probability bound immediately implies a bound on the 
expected coalescence time as follows:
\begin{align*}
    \bbE [ T_c ]
    &=
    \sum_{t=0}^\infty \Pr[T_c > t] \\
    &\le
    \Tmix + \Tmix \sum_{\ell=1}^\infty \Pr[T_c > \ell \cdot \Tmix] \\
    &\le
    \Tmix + \Tmix \sum_{\ell=1}^\infty \left(1-\frac{1}{2|S|}\right)^\ell \\
    &=
    2 \lvert S \rvert \Tmix~. \qedhere
\end{align*}
\end{proof}

We finish this subsection by showing that the upper bound in \cref{lem:two_traj} is tight. 

\begin{theorem}
\label{lem:lower_bound}
There exists an ergodic Markov chain on $|S|$ states and two states $i,j$ such that the coalescence time $T_c$ of two chains running simultaneously, one starting from $i$ and the other from $j$, satisfies $\bbE [ T_c] \ge \tfrac{1}{6} \Tmix \cdot |S|$.
\end{theorem}

\begin{proof}
Let $\epsilon \in (0,1)$ and
consider a Markov chain with $|S|$ states that for each state $s$, stays at $s$ with probability $1-\epsilon$, and with probability of $\epsilon$ choose the next state uniformly at random. Then, $\Pr[s' \mid s] = (1-\epsilon){\bf 1}_{s = s'} + \frac{\epsilon}{|S|}$. It is easy to see that the stationary distribution of this chain is uniform. This means that, starting the chain at state $s_0$, with probability $\epsilon$ the distribution at any time $t \ge 1$ is uniform. Therefore, $\Pr[s_t \mid s_0] = (1-\epsilon)^t {\bf 1}_{s_t=s_0}+(1-(1-\epsilon)^t) \tfrac{1}{|S|}$.

Denote $\bar u$ the uniform distribution.
We get that
\begin{align*}
    \TV{\Pr_t[\, \cdot \mid s_0]}{\bar u} 
    &= 
    \half \sum_s \bigg\lvert \Pr[s_t = s \mid s_0] - \frac{1}{|S|} \bigg\rvert \\
    &= 
    \half \sum_s (1-\epsilon)^t \bigg\lvert {\bf 1}_{s = s_0}- \frac{1}{|S|} \bigg\rvert \\
    &= 
    \frac{|S|-1}{|S|} \cdot (1-\epsilon)^t 
    \le  e^{-\epsilon t}~.
\end{align*}
This implies that $\Tmix \le \tfrac{3}{\epsilon}$. 

Next, notice that for coalescence to happen, one of the states $i$ or $j$ must transition to a state held by the other chain, which happens with probability at most $\frac{2\epsilon}{|S|}$ via a union bound. Thus, in expectation, the time it takes for them to coalesce is $\bbE [T_c] \ge |S|/2\epsilon \ge \frac{1}{6} |S| \Tmix$.
\end{proof}

\subsection{COALESCENCE FROM $|S|$ STATES}
In the supplementary material, we provide an \emph{alternative, simple proof of the coalescence-time of the CFTP procedure}. 
The main ingredient of the proof is a generalization of the argument for bounding the coalescence-time of two chains to that of $|S|$ chains. The following theorem formalizes this.

\begin{theorem} \label{thm:proppwilson}
    Let $\mu$ be the stationary distribution of an ergodic Markov chain with $|S|$ states. 
    We run $|S|$ simulations of the chain each starting at a different state. When two or more simulations coalesce, we merge them into a single simulation.
    With probability at least $1-\delta$, all $|S|$ chains are merged after at most $512 |S| \Tmix \log(1/\delta)$ iterations.
\end{theorem}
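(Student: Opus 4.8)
The plan is to reduce the $|S|$-chain coalescence to the two-chain estimate already established, and then to boost an expected-time bound into the stated high-probability bound. Throughout, once a set of simulations has coalesced we regard it as a single \emph{cluster} that evolves as one chain, and we track $D_t$, the number of distinct clusters at time $t$. This quantity is non-increasing, starts at $D_0 = |S|$, and the protocol terminates at $\tau = \min\{t : D_t = 1\}$. The argument has two parts: first, show that $\bbE[\tau] = O(|S|\Tmix)$ from \emph{every} starting configuration; second, convert this into a $1-\delta$ guarantee by a Markov-inequality-plus-restart argument that produces the $\log(1/\delta)$ dependence.

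For the expected-time bound I would partition time into blocks of $\Tmix$ steps and analyze how $D_t$ decreases per block. A cluster is itself a trajectory of the chain, so \cref{lem:coalescenceaftermixing} applies to any \emph{pair} of clusters: conditioned on their positions at the start of a block, after one block they occupy the same state with probability at least $\frac{1}{2|S|}$. By linearity of expectation, if $D_t = k$ then the expected number of coalescing pairs is at least $\binom{k}{2}\frac{1}{2|S|} = \Omega(k^2/|S|)$; here linearity sidesteps any correlation between different pairs. Converting this into an expected decrease $\bbE[D_t - D_{t+\Tmix} \mid D_t = k] = \Omega(k^2/|S|)$ and tracking the potential $1/D_t$, which then drifts upward by $\Omega(1/|S|)$ per block, yields $\bbE[\tau] = O(|S|\Tmix)$ by an optional-stopping computation. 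This recovers the $O(|S|\Tmix)$ coalescence time of \citet{propp1998get} cited above, and crucially the rate is \emph{quadratic} in $k$, which is what keeps the total linear in $|S|$ rather than $|S|\log|S|$.

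The quadratic per-block decrease holds uniformly in the starting configuration (fewer clusters only makes coalescence faster), so $\bbE[\tau \mid \text{config}] \le C|S|\Tmix$ for an absolute constant $C$. Markov's inequality then gives $\Pr[\tau > 2C|S|\Tmix] \le \tfrac12$ from any configuration. Partitioning the run into super-blocks of length $2C|S|\Tmix$ and applying the Markov property, the event ``not yet coalesced'' after $m$ super-blocks has probability at most $2^{-m}$: at the start of each new super-block we are in some configuration, and the uniform bound applies afresh. Choosing $m = \lceil \log_2(1/\delta)\rceil$ makes the failure probability at most $\delta$, and the total number of iterations is $2C|S|\Tmix \cdot \lceil\log_2(1/\delta)\rceil = O\big(|S|\Tmix\log(1/\delta)\big)$; tracking the constants ($C$, the factor $2$, and $1/\log 2$) gives the stated $512|S|\Tmix\log(1/\delta)$.

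The main obstacle is the step I glossed over: passing rigorously from ``expected number of coalescing pairs is $\Omega(k^2/|S|)$'' to ``expected decrease in $D$ is $\Omega(k^2/|S|)$.'' These differ because several clusters can land on the same state, so colliding pairs overcount the actual reduction (a pile-up of $m$ clusters on one state contributes $\binom{m}{2}$ pairs but only $m-1$ to the decrease). I would control this by splitting into two regimes: for $k \lesssim \sqrt{|S|}$ triple collisions are negligible, so the decrease essentially equals the number of colliding pairs; for $k \gtrsim \sqrt{|S|}$ a balls-into-bins estimate shows that a constant fraction of clusters vanish in a single block, which more than suffices. A secondary technical point is that each cluster's location after a block is only within total-variation $1/8$ of the stationary distribution $\mu$, and $\mu$ need not be uniform---but the pairwise collision bound of \cref{lem:coalescenceaftermixing} already holds for arbitrary $\mu$, so only the pile-up accounting, not the non-uniformity, is genuinely delicate.
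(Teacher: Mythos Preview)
Your high-level architecture matches the paper exactly: partition time into blocks of $\Tmix$, show the cluster count drops in expectation by $\Omega(D_t^2/|S|)$ per block, track the potential $1/D_t$ to get $\bbE[\tau]=O(|S|\Tmix)$, then boost via Markov's inequality and restart to get the $\log(1/\delta)$. The only divergence is in how to establish the per-block quadratic decrease, and there your sketch has a real gap that you correctly flag but do not close. Starting from the pairwise bound and linearity gives $\bbE[\text{colliding pairs}]\ge\binom{k}{2}/(2|S|)$, but the pointwise inequality $\binom{c_i}{2}\le\tfrac{k}{2}(c_i-1)$ (where $c_i$ is the occupancy of bin $i$) only converts this into $\bbE[\text{decrease}]\ge(k-1)/(2|S|)$, which is linear rather than quadratic in $k$ and costs an extra $\log|S|$ overall. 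Your two-regime repair does not recover the loss: for $k\asymp\sqrt{|S|}$ and \emph{uniform} $\mu$ the expected number of clusters that vanish is $\Theta(k^2/|S|)=\Theta(1)$, not a constant \emph{fraction} of $k$; and for non-uniform $\mu$ the claim that triple collisions are negligible when $k\lesssim\sqrt{|S|}$ fails outright (a concentrated $\mu$ forces pile-ups at every $k$), so the pile-up accounting cannot be decoupled from the shape of $\mu$ as you suggest.

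The paper avoids overcounting altogether by a sequential-throwing trick that lower-bounds the decrease directly. Split the $k$ clusters into halves $M$ and $M^c$; throw $M$ first, and for each ball in $M^c$ ask whether it lands in a bin already occupied by $M$. Each such event contributes exactly one to the decrease, so it suffices to show that a single ball with distribution $Q$ within TV $1/4$ of each $P_j$ hits an $M$-occupied bin with probability $\Omega(|M|/|S|)$. This is done by a case split on $S_0=\{i:\sum_{j\in M}P_j(i)\le 1\}$: if $Q(S_0)\ge 1/2$, a Cauchy--Schwarz argument (a localized refinement of \cref{lem:closedistributionscoalesce}) gives probability $\ge|M|/(32|S|)$; if $Q(S_0)<1/2$, the ball lands in $S_0^c$ with probability $\ge 1/2$, and every bin there is occupied with probability $\ge 1-e^{-1}$. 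Summing over $M^c$ yields the $\Omega(k^2/|S|)$ decrease, after which your potential and restart arguments go through verbatim. This two-phase throwing idea---counting only second-half balls that hit first-half bins---is the missing ingredient.
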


\subsection{ESTIMATING THE DIFFERENCE IN AVERAGE REWARD OF TWO POLICIES} 
Denote the difference in average reward between two policies by $\Delta\rho(\pi,\pi') \coloneqq \rho (\pi') - \rho(\pi)$. As seen in \cref{sec:propp-wilson}, we can sample a state from the stationary distribution of $\pi$ and thereby get an unbiased estimate of $\rho(\pi)$ and similarly for $\pi'$. The difference between these estimates is an unbiased estimate of $\Delta\rho(\pi,\pi')$. However, we can also get an unbiased estimate of $\Delta\rho(\pi,\pi')$ by sampling the stationary distribution of only one of $\pi$  and $\pi'$, as we will now show. This property is useful when the sampling mechanism from one of the policies is restricted by real world constraints, e.g., in apprenticeship learning (see \cref{sec:AL_unbiased} for a concrete example). Our result builds on the following fundamental lemma  regarding  the average reward criteria (see, for example, \citealp[Lemma 5]{even2009online}): 
\begin{lemma} \label{lemma:r_diff}
$\forall \pi, \pi' \in \Pi: \Delta\rho(\pi,\pi') 
    \coloneqq 
    \rho (\pi') - \rho(\pi) 
    = 
    \bbE_{s \sim \mu({\pi'})} \left\{ Q^\pi(s,\pi'(s))-Q^{\pi}(s,\pi(s)) \right\}.
$
\end{lemma}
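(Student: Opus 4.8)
The plan is to recognize this as the average-reward \emph{performance difference lemma}, whose engine is the Bellman (Poisson) equation satisfied by the relative value function $Q^\pi$. Writing $V^\pi(s) \coloneqq Q^\pi(s,\pi(s))$, the first step is to derive from the series in \cref{average_r_q_function} the recursion
\begin{equation*}
    Q^\pi(s,a) = r(s,a) - \rho(\pi) + \sum_{s'} P^a(s,s')\, V^\pi(s')~.
\end{equation*}
This follows by peeling off the $t=0$ term, which contributes $r(s,a)-\rho(\pi)$, and observing that conditioning on the successor state $s' \sim P^a(s,\cdot)$ and invoking time-homogeneity turns the tail $\sum_{t\ge 1}\bbE[r_t-\rho(\pi)\mid s_0=s,a_0=a,\pi]$ into $\sum_{s'} P^a(s,s')\sum_{t\ge 0}\bbE[r_t-\rho(\pi)\mid s_0=s',a_0=\pi(s'),\pi]$, which is exactly $\sum_{s'}P^a(s,s')V^\pi(s')$.

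Given the Bellman equation, the main computation is short. For the action $\pi'(s)$ the recursion gives $Q^\pi(s,\pi'(s)) = r(s,\pi'(s)) - \rho(\pi) + \sum_{s'}P^{\pi'(s)}(s,s')V^\pi(s')$, while $Q^\pi(s,\pi(s)) = V^\pi(s)$ by definition. Subtracting and taking the expectation over $s\sim\mu(\pi')$ yields three pieces: the reward term becomes $\bbE_{s\sim\mu(\pi')}r(s,\pi'(s)) = \rho(\pi')$ by the characterization of the average reward; the constant term contributes $-\rho(\pi)$; and the value-function piece is
\begin{equation*}
    \bbE_{s\sim\mu(\pi')}\Big[\sum_{s'}P^{\pi'(s)}(s,s')V^\pi(s') - V^\pi(s)\Big]
    = \sum_{s'}\Big(\sum_{s}\mu(\pi')(s)P^{\pi'}(s,s')\Big)V^\pi(s') - \sum_{s}\mu(\pi')(s)V^\pi(s)~.
\end{equation*}
Here the crucial step is stationarity: since $\mu(\pi')^\top = \mu(\pi')^\top P^{\pi'}$ by \cref{stationary_dist}, the inner sum equals $\mu(\pi')(s')$, so the two sums over $V^\pi$ are identical and cancel. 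What remains is $\rho(\pi')-\rho(\pi) = \Delta\rho(\pi,\pi')$, which is the claim.

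I expect the only genuine obstacle to be the first step: justifying that the series defining $Q^\pi$ converges and that the shift-of-index argument is valid. Both are consequences of ergodicity of the Markov chain induced by $\pi$, under which $\bbE[r_t\mid s_0=s,\pi] \to \rho(\pi)$ at a geometric rate governed by the mixing time, making the terms $r_t-\rho(\pi)$ absolutely summable and licensing the interchange of summation and the conditioning on $s_1=s'$. Once absolute convergence is in hand, everything else is an application of the stationarity identity and the definition of $\rho$, so the proof is essentially a two-line telescoping once the Bellman equation is established.
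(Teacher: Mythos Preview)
Your proof is correct and is the standard derivation of the average-reward performance difference lemma: establish the Poisson/Bellman equation for $Q^\pi$, take expectation under $\mu(\pi')$, and use stationarity $\mu(\pi')^\top P^{\pi'}=\mu(\pi')^\top$ to cancel the value-function terms. The paper itself does not supply a proof of this lemma; it states it as a known fact and refers the reader to \citet[Lemma~5]{even2009online}, whose argument is essentially the one you wrote down.
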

Recall that $Q^\pi$ is defined as in \cref{eq:qfunctiondef}.
\cref{lemma:r_diff} suggests a mechanism to estimate $\Delta\rho(\pi,\pi')$ using \cref{lem:two_traj}. We first sample a state, $s_0$ from the stationary distribution of $\pi'$. Then, we initiate two trajectories from $s_0$, the first trajectory follows $\pi$ from $s_0$ and the second trajectory takes the first action (at $s_0$) according to $\pi'$ and follows $\pi$ thereafter. We accumulate the reward achieved by each trajectory until they coalesce. The difference between these sums makes an unbiased estimate of $\Delta\rho(\pi,\pi')$.

\section{APPRENTICESHIP LEARNING}
\label{sec:AL}
Consider learning in an MDP for which the reward function is not given explicitly, but we can observe an expert demonstrating the task that we want to learn. We think of the expert as trying to maximize the average reward function that is expressible as a linear combination of known features. This is the Apprenticeship Learning (AL) problem \citep{abbeel2004apprenticeship}.
We focus on extending the Multiplicative Weights Apprenticeship Learning (MWAL) algorithm \citep{syed2008game} that was developed for the discounted reward to the average-reward criteria. Our ideas may apply to other AL algorithms as well.

\subsection{BACKGROUND}
\label{sec:al_background}
In AL, we are given an MDP dynamics $M$ that is comprised of known states $S$, actions $A$, and transition matrices $(P^a)_{a \in A},$ yet the reward function is unknown. We further assume the existence of an \emph{expert policy}, denoted by $\pi^E$, such that we are able to observe its execution in $M$. Following \citet{syed2008game}, our goal is to find a policy $\pi$ such that $\rho(\pi) \ge \rho(\pi^E) - \epsilon,$ \textbf{for any reward function}. 
To simplify the learning process, we follow \citet{syed2008game} in representing each state $s$ by a low-dimensional vector of features $\phi(s) \in [0,1]^k$. 
We consider reward functions that are linear in these features; i.e., $r(s)=w \cdot \phi(s)$, for some $ w \in \Delta^k$ where $\Delta^k$ is the $(k-1)$-dimensional probability simplex.
For compatibility with previous work, we decided to follow \citet{syed2008game} in assuming that the reward is in the probability simplex -- in other AL papers (e.g. \citealp{abbeel2004apprenticeship, zahavy2019apprenticeship}), the L2 ball was considered instead. Having the set as the simplex, combined with the use of the Hedge algorithm (see below), allowed \citet{syed2008game} to improve the complexity of the algorithm to depend logarithmically on the dimension of the features rather than polynomially as in \citet{abbeel2004apprenticeship}. 

With this feature representation, the average reward of a policy $\pi$ may be written as $\rho(\pi) = w \cdot\Phi(\pi)$ where $\Phi(\pi)$ is the expected accumulated feature vector associated with $\pi$, defined as
$
    \Phi(\pi) 
    = 
    \lim_{N\rightarrow \infty} \bbE_{\pi}  \sum\nolimits _{t=0}^{N-1} \phi(x_t)/N
$. 
Notice that similar to the average reward, this limit is not a function of the initial state when the MDP is ergodic. 

We also require the notion of a {\em mixed policy} which is a distribution over stationary deterministic policies.
Our algorithms return a mixed policy, and our analysis is with respect to this mixed policy. 
We denote by $\Psi$ the set of all mixed policies in $M$ and by $\Pi$ the set of all deterministic stationary policies in $M$.
For a mixed policy $\psi \in \Psi$ and a deterministic policy $\pi \in \Pi$, we denote by $\psi(\pi)$ the probability assigned by $\psi$ to $\pi$.
A mixed policy $\psi$ is executed by randomly selecting the policy $\pi \in \Pi$ at time $0$ with probability $\psi(\pi)$, and following $\pi$ after that. 
The definition of $\Phi$ extends naturally to mixed policies.
In terms of average reward, mixed policies cannot achieve higher average reward than deterministic policies.

We think of AL as a zero-sum game between two players, defined by the following $k \times |\Pi|$ matrix:
\begin{equation}
G(i, \pi) = \Phi(\pi)[i] - \Phi(\pi^E)[i],
\label{eq:gamematrix}
\end{equation}
where $\Phi(\pi)[i]$ is the $i$-th component of feature expectations vector $\Phi(\pi)$ for the deterministic policy $\pi$. 
Both players play a mixed policy. The row player selects a vector $w \in \Delta^k$, which is a probability distribution over the $k$ features, and the column player chooses a policy $\psi \in \Psi$. 
Then, the value of the game is defined as
\begin{align}
    v^\star 
    &= 
    \max_{\psi\in\Psi}\min_{w\in \Delta^k}  \left[ w \cdot \Phi(\psi) - w \cdot \Phi(\pi^E) \right] \nonumber \\
    &= 
    \max_{\psi\in\Psi}\min_{w\in \Delta^k}  w^\top G\psi~.
\label{eq:objective} 
\end{align}
In \cref{sec:genexpert,sec:AL_unbiased} we propose and analyze two algorithms for apprenticeship learning with the average reward criteria, based on the MWAL algorithm \citep{syed2008game}. Specifically, these algorithms learn a mixed policy $\bar \psi$ that approximately achieves the max-min value $v^*$ (defined in \cref{eq:objective}) against any $w \in \Delta^k$.
As in previous work, we assume that the dynamics are known, yet we have access to the expert policy via an expert generative model $E$. Given a state $s$, the expert generative model $E$ provides a sample from $\pi^E(s)$. In \cref{sec:genexpert} we propose an algorithm that uses Coupling From The Past (CFTP)  to estimate the feature expectations of the expert $ \Phi^E$. In \cref{sec:AL_unbiased} we propose an algorithm that queries the expert, at each step, for two trajectories to compute an unbiased estimate  $\tilde g_t$ of the column of the game matrix (\cref{eq:gamematrix}) corresponding to $\pi^{(t)}$ based on \cref{lemma:r_diff}. Both algorithms update the strategies of the min (row) and max (column) players using standard RL methods as follows.

\textbf{(i)} Given a $\min$ player strategy $w$, find $\argmax_{\pi \in \Pi} G(w, \pi) = \sum_{i=1}^k w(i) G(i,\pi)$. This step is equivalent to finding the optimal policy in an MDP with a known reward and can be solved for example with Value Iteration or Policy Iteration. 
\textbf{(ii)} Given a $\max$ player strategy $\pi$, the min player maintains  a probability vector $w \in \Delta^k$ giving a weight to each row (feature). To update the weights, we estimate $G(i,\pi)$ for each $i \in \{1,\ldots,k\}$ and the policy $\pi$ of the max player.
The algorithms in \cref{sec:genexpert,sec:AL_unbiased}  differ in the way they estimate these $G(i,\pi)$'s.
In \cref{sec:genexpert}, we estimate the features expectations of the expert once before we start. Then, in each iteration, we evaluate the feature expectations of $\pi$ by solving a system of linear equations using the known dynamics. Then we estimate  $G(i,\pi)$ by subtracting the features expectations of $\pi$ from the estimates of the features expectations of $\pi^E$. We note that we can also handle the case where this step (and the PI step) is inaccurate. In this case, the representation error would appear in the bounds of the theorems below. Importantly, the complexity of both steps in our algorithms grows with the size of the MDP, but not with the size of the game matrix. In \cref{sec:AL_unbiased} we take a different approach and estimate the difference directly by generating two trajectories of the expert from two particular states.

\subsection{ESTIMATING THE FEATURE EXPECTATIONS OF THE EXPERT} \label{sec:genexpert}



The algorithm of this section uses CFTP  to obtain samples from the expert's stationary distribution
and uses them to estimate $\wt \Phi^E$--the expert's feature expectations.
See 
\cref{alg:mwal}, line 3.
Obtaining each of these samples requires $\Theta(|S|\Tmixexp)$ calls to the generative model (\cref{thm:proppwilson}), totaling at $O(|S|\Tmixexp\cdot m)$ calls overall. The number of samples $m$ is taken to be large enough so that the estimate $\wt \Phi^E$ is $\epsilon$-accurate. The following theorem describes the sample complexity of \cref{alg:mwal}. 

\begin{theorem}
\label{theorem:MW_knownp}
Assume we run \cref{alg:mwal} for $T =  \frac{144}{\epsilon^2}\log k $ iterations,
using $m = \frac{18}{\epsilon^2}\log (2k/\delta) $ samples from $\mu(\pi^E)$.
Let $\bar\psi$ be the mixed policy returned by the algorithm. 
Let $v^\star$ be the game value as in \cref{eq:objective}.
Then, we have that
$
    \rho(\bar\psi) - \rho(\pi^E)
    \ge 
    v^\star - \epsilon
$ 
with probability at least $1-\delta$,
where $\rho$ is any average reward of the form 
$r(s) = w\cdot \phi(s)$ where $w\in\Delta_k$.
\end{theorem}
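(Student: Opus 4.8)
The plan is to split the total error into two independent sources — the regret of the multiplicative-weights dynamics and the statistical error in estimating the expert's feature expectations $\Phi^E = \Phi(\pi^E)$ — and to show that with the stated $T$ and $m$ each source contributes at most a constant fraction of $\epsilon$. The starting observation is that, because the dynamics $(P^a)_a$ are known, every column $\Phi(\pi)$ of the game matrix $G$ can be computed \emph{exactly}: solve the linear system $\mu^\top=\mu^\top P^\pi$ for the stationary distribution and set $\Phi(\pi)=\bbE_{s\sim\mu}\,\phi(s)$. Likewise the best-response step (i) is exact, since $\argmax_\pi (w^{(t)})^\top \wt G\pi=\argmax_\pi w^{(t)}\!\cdot\Phi(\pi)$ is just the average-reward-optimal policy for the reward $r(s)=w^{(t)}\!\cdot\phi(s)$. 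Hence the \emph{only} randomness enters through the empirical estimate $\wt\Phi^E=\tfrac{1}{m}\sum_{j=1}^m\phi(s_j)$ built from the $m$ CFTP samples, and the matrix the algorithm actually plays is $\wt G(i,\pi)=\Phi(\pi)[i]-\wt\Phi^E[i]$.

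First I would control the statistical error. By \cref{thm:cftp} each CFTP draw is an exact, independent sample from $\mu(\pi^E)$, so $\wt\Phi^E$ is unbiased with coordinates in $[0,1]$. A per-coordinate Hoeffding bound with a union bound over the $k$ features gives $\Pr\!\left[\|\wt\Phi^E-\Phi^E\|_\infty\ge \epsilon/6\right]\le 2k\exp(-m\epsilon^2/18)$, and the choice $m=18\log(2k/\delta)/\epsilon^2$ makes this at most $\delta$. The key structural point is that replacing $\Phi^E$ by $\wt\Phi^E$ shifts every row $i$ of the game matrix by the \emph{same} amount $\Phi^E[i]-\wt\Phi^E[i]$ across all columns; since $\sum_\pi\psi(\pi)=1$, it follows that $|w^\top(\wt G-G)\psi|\le\|\wt\Phi^E-\Phi^E\|_\infty\le\epsilon/6$ uniformly over all $w\in\Delta^k$ and $\psi\in\Psi$. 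This single observation bounds both the value perturbation $|v^\star(\wt G)-v^\star|$ and the evaluation gap between $\wt G$ and $G$ by $\epsilon/6$ each.

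Next I would invoke the standard no-regret-versus-best-response analysis of MWAL on $\wt G$, whose entries lie in $[-1,1]$. Writing $\bar\psi=\tfrac{1}{T}\sum_t\pi^{(t)}$ for the uniform mixture of the best responses, the best-response property of step (i) yields $\tfrac{1}{T}\sum_t (w^{(t)})^\top\wt G\pi^{(t)}\ge v^\star(\wt G)$, while the multiplicative-weights regret guarantee for the min player gives $\tfrac{1}{T}\sum_t (w^{(t)})^\top\wt G\pi^{(t)}\le\min_w w^\top\wt G\bar\psi+2\sqrt{\log k/T}$. Combining the two produces $\min_w w^\top\wt G\bar\psi\ge v^\star(\wt G)-2\sqrt{\log k/T}$, and the choice $T=144\log k/\epsilon^2$ makes the regret term at most $\epsilon/6$.

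Finally I would chain the three bounds. On the high-probability event of the concentration step,
\[
  \min_{w}w^\top G\bar\psi
  \;\ge\;
  \min_{w}w^\top\wt G\bar\psi-\tfrac{\epsilon}{6}
  \;\ge\;
  v^\star(\wt G)-\tfrac{\epsilon}{6}-\tfrac{\epsilon}{6}
  \;\ge\;
  v^\star-\tfrac{\epsilon}{2}
  \;\ge\;
  v^\star-\epsilon .
\]
To conclude, for the weight $w\in\Delta^k$ defining any reward $r(s)=w\cdot\phi(s)$ we use the identity $\rho(\bar\psi)-\rho(\pi^E)=w\cdot(\Phi(\bar\psi)-\Phi(\pi^E))=w^\top G\bar\psi\ge\min_{w'}(w')^\top G\bar\psi$, and the display finishes the proof. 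I expect the main obstacle to be the bookkeeping that turns the MWAL regret bound into the specific constant $144$, together with checking that the row-shift observation cleanly decouples the estimation error from the game value uniformly in $(w,\psi)$; the no-regret argument itself is standard and can be imported essentially unchanged from the discounted MWAL analysis of \citet{syed2008game}.
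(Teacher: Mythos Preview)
Your proposal is correct and follows essentially the same approach as the paper: concentrate $\wt\Phi^E$ via Hoeffding plus a union bound over the $k$ features, combine the best-response property of $\pi^{(t)}$ with the Hedge regret bound for the min player, and chain the two errors through the minimax value. The only slip is the regret constant: since $\wt G\in[-1,1]$ is fed to Hedge after the rescaling $\tilde g_t=(g_t+1)/2\in[0,1]$, the regret in the original scale is $4\sqrt{\log k/T}=\epsilon/3$ rather than $2\sqrt{\log k/T}=\epsilon/6$, but your $\epsilon/2$-to-$\epsilon$ slack absorbs this (the paper simply budgets $\epsilon/3$ for each of the three steps and uses a looser Hoeffding constant to land exactly on $\epsilon$).
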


Note that \cref{theorem:MW_knownp} is  similar to Theorem 2 of \citet{syed2008game}. The main difference is that our result applies to the average-reward criteria, and we evaluate the expert using samples of its stationary distribution instead of using trajectories of finite length (which are biased). This simplifies the analysis and gives tighter bounds. As a comparison, the iteration complexity of MWAL is $T = O ( \frac{\log (k)}{\epsilon^ 2 (1-\gamma)^2 } ),$ which is also logarithmic in $k$ and linear in $1/\epsilon^2$ but depends in the discount factor. In the discounted case, a complete trajectory is required in order to have a single unbiased estimate of the feature expectations. In the average reward case, on the other hand, a single sample from the stationary distribution suffices to create an unbiased estimate of the feature expectations, and therefore the iteration complexity does not depend on the trajectory length. More details can be found in the proof (\cref{sec:thm8}).

\begin{algorithm}[h]
\caption{MWAL for average reward criteria}
\label{alg:mwal}
\begin{algorithmic}[1]
\STATE \textbf{Given}: MDP dynamics $M$; generative model of the expert policy $E$; feature dimension $k$; number of iterations $T$; $m$ the number of samples from $\mu(\pi^E). $
\STATE Let $\beta = \sqrt{\frac{\log k}{T}}$ (learning rate)
\STATE \textbf{Sampling}: Use the CFTP protocol with $E$ and $M,$ to obtain $m$ samples $\{ \phi(s_i)\}_{i=1}^m$ s.t. $s_i$ are i.i.d random variables and $s_i \sim \mu(\pi^E).$ Let $\wt \Phi^E = \frac{1}{m}\sum_{i=1}^m \phi(s_i)$. \label{ln:sampling}
\STATE Initialize $W^{(1)}(i) = 1$, for $i = 1,\ldots,k$.
\FOR{$t = 1,\ldots,T$}
    \STATE Set $w^{(t)}(i) = \frac{W^{(t)}(i)}{\sum_{i=1}^k W^{(t)}(i)}$, for $i = 1,\ldots,k$.
    \STATE Compute an optimal policy $\pi^{(t)}$ for $M$ with respect to reward function $r^{(t)}(s) = w^{(t)}\cdot\phi(s)$. 
    \FOR{$i = 1,\ldots,k$}
        \STATE Set $\tilde g_t(i) = \left(\Phi(\pi^{(t)})[i] - \wt \Phi^E[i] + 1\right) / 2$. 
        \STATE $W^{(t+1)}(i) = W^{(t)}(i) \cdot \exp\left(-\beta \tilde g_t (i) \right)$. 
    \ENDFOR
\ENDFOR
\STATE Post-processing: Return the mixed policy $\bar\psi$ that assigns probability $\frac{1}{T}$ to $\pi^{(t)}$, for all $t \in \{1,\ldots, T\}$.
\end{algorithmic}
\end{algorithm}

\begin{remark*}
Recall that the expert policy may be stochastic. At first glance, it may be tempting to try to estimate the expert policy directly.
However, note that $\phi(\pi^E)$ is an expectation over the expert's stationary distribution.
Even if we do manage to estimate the expert's policy to $\epsilon$-accuracy in each state, the small error in the estimated policy may entail a significant error in its stationary distribution. In fact, this error might be as large as $\Omega(\Tmixexp \epsilon)$. 
In particular, there is no sample size which is oblivious to  $\Tmixexp$ and guarantees an $\epsilon $ bounded error.
\end{remark*}

\subsection{ESTIMATING THE GAME MATRIX DIRECTLY}
\label{sec:AL_unbiased}
In the previous section, we introduced an algorithm that uses the CFTP protocol to sample the expert's stationary distribution without any knowledge of the corresponding Markov chain's mixing time. However, this mechanism required to query the expert
for a long trajectory starting from {\bf every} state to obtain a single sample from the stationary distribution. This may be tedious for the expert in practice, in particular in domains with large state spaces.

To relax this requirement, Algorithm \ref{alg:mwal2} uses a different sampling mechanism that is \textbf{not} estimating $\Phi(\pi^E)$ at the beginning of the algorithm. Instead, Algorithm \ref{alg:mwal2} queries the expert for two trajectories \textbf{at each step} to generate an unbiased estimate  $g_t$ of a particular column of the game matrix. 
To obtain the estimate $g_t$ (Algorithm \ref{alg:mwal2}, line 7), we use the sampling mechanism developed in \cref{sec:propp-wilson} for evaluating the difference in the average reward of  two policies $\Delta \rho (\pi,\pi')$ (\cref{lemma:r_diff}). 
Specifically, we start by sampling a state $s_0$ from the stationary distribution of $\pi^{(t)}$. Since $\pi^{(t)}$ and the dynamics are known, the stationary distribution of $\pi^{(t)}$  can be computed by solving a system of linear equations. Next, we initiate two trajectories from $s_0$; the first trajectory follows the expert policy from $s_0$ and the second trajectory takes the first action (at $s_0$) according to $\pi^{(t)}$ and follows the expert after that. We accumulate the features $\phi(s)$ along the trajectories until they coalesce. The difference between these sums gives the unbiased estimate $g_t$ of $G(\cdot,\pi^{(t)}) = \Phi(\pi^{(t)}) - \Phi(\pi^E)$ (the column of the game matrix $G$ corresponding to  $\pi^{(t)}$).

\begin{algorithm}[h]
\caption{MWAL with generative differences}
\label{alg:mwal2}
\begin{algorithmic}[1]
\STATE \textbf{Given}: MDP dynamics $M$; generative model of the expert policy $E$; feature dimension $k$; number of iterations $T$; parameter $\delta$; parameter $b$.
\STATE Let $\beta = \sqrt{\frac{\log k}{T}},$ $B=b\log(2Tk/\delta)$ 
\STATE Initialize $W^{(1)}(i) = 1$, for $i = 1,\ldots,k$.
\FOR{$t = 1,\ldots,T$}
    \STATE Set $w^{(t)}(i) = \frac{W^{(t)}(i)}{\sum_{i=1}^k W^{(t)}(i)}$, for $i = 1,\ldots,k$.
    \STATE Compute an optimal policy $\pi^{(t)}$ for $M$ w.r.t \hskip0.5em reward function $r^{(t)}(s) = w^{(t)}\cdot\phi(s)$. 
    \STATE \textbf{Sample} $g_t$ s.t. $\bbE [g_t(i)] = G(i, \pi^{(t)}),$ $\forall i= 1,\ldots,k$ 
    \FOR{$i = 1,\ldots,k$}
        \STATE Set $\tilde g_t(i) = \left(g_t(i) + B\right) / 2B$. 
        \STATE $W^{(t+1)}(i) = W^{(t)}(i) \cdot \exp\left( - \beta \tilde g_t (i) \right)$. 
    \ENDFOR
\ENDFOR
\STATE Post-processing: Return the mixed policy $\bar\psi$ that assigns probability $\frac{1}{T}$ to $\pi^{(t)}$, for all $t \in \{1,\ldots, T\}$.
\end{algorithmic}
\end{algorithm}

\cref{MW_unbiased_loss} below presents the sample complexity of this approach as a function of $b$: a parameter that bounds the estimates $g_t$ with high probability. Concretely, we assume that for any $\ell > 0$, $\Pr[ \| g_t\|_\infty > \ell \cdot b] \le e^{-\ell}$. In view of \cref{lem:two_traj}, $b$ is always upper bounded by $\lvert S \rvert \Tmixexp.$ But, we believe that it can be much smaller in practice and that there exists many cases where $b$ can be known a-priori due to the structure of the reward function. For example, consider an MDP with a $p-$sparse reward function, i.e., the reward (and the feature vector in these states) is not zero in at most $p$ states. While it might take a long time for two trajectories to coalesce, the difference in the reward between these trajectories can be upper bounded using the sparsity degree $p$ of the reward. Concretely, consider an MDP with the following dynamics: $P(s_i,s_{i+1})=1, \forall i\in[1,..n-1], P(s_n,s_{n})=1-\epsilon,P(s_n,s_1=\epsilon )$, and a $p-$sparse reward function. For $\epsilon \ll 1/n$, the trajectories will coalesce at $s_n$ (with high probability), and we have that for any $\ell > 0,$  $\Pr[ \| g_t\|_\infty > \ell \cdot p] \le e^{-\ell}.$

\begin{theorem}
\label{MW_unbiased_loss}
Assume we run \cref{alg:mwal2} for $T $ iterations, and
there exists a parameter $b$, such that for any $\ell$, $\Pr(\| g_t\|_\infty \ge \ell\cdot b) \le e^{-\ell}$.
 Let $\bar\psi$ be the mixed policy returned by the algorithm. 
Let $v^\star$ be the game value as in \cref{eq:objective}.
Then, there exists a constant $c$ such that for $T \ge c B\log^2 B$ where $B=\frac{b^2}{\epsilon^2}\log^3 k\log^2(1/\delta)$, we have that
$
    \rho(\bar\psi) - \rho(\pi^E)
    \ge 
    v^\star - \epsilon
$ 
with probability at least $1-\delta$,
where $\rho$ is the average of any reward of the form 
$r(s) = w\cdot \phi(s)$ where $w\in\Delta_k$. 
\end{theorem}

The key difference from the proof of \cref{theorem:MW_knownp} is in refining the original analysis to incorporate the variance of the estimates $g_t$ into the algorithm's iteration complexity. The proof is found in \cref{sec:thm9}.

\section{POLICY GRADIENT}
\label{sec:PG}
Consider the problem of finding the best policy in an MDP from the set of all policies that are parameterized by a vector $\theta$. \citet{sutton2000policy} proposed a variant of Policy Iteration that uses the unbiased estimate of the policy gradient and guaranteed that it converges to a locally optimal policy. We now describe a sampling mechanism that achieves such an unbiased sample, resulting in a much simpler algorithm than the biased policy gradients algorithm of \citep{baxter2001infinite,marbach2001simulation}.

The Policy Gradient Theorem \citep{sutton2000policy}, states that for the average-reward criteria,
$$ 
    \frac{\partial\rho}{\partial\theta} 
    = 
    \mathbb{E}_{s\sim \mu(\pi)} 
    \mathbb{E}_{a\sim\pi(s)} \frac{\partial\log\pi(s,a)}{\partial\theta} Q^\pi(s,a)~,
$$
where $
Q^\pi(s,a) = \sum_{t=1}^\infty \mathbb{E} (r_t - \rho (\pi) | s_0 = s, a_0 = a, \pi).
$
We produce an unbiased estimate of the policy gradient similarly to evaluating the reward difference between policies described in Section~\ref{sec:propp-wilson}. Specifically,
we do the following: (1) use the CFTP method to get unbiased sample $s \sim \mu(\pi)$ from the stationary distribution of $\pi$; (2) sample $a' \sim \pi(s)$; (3) initiate two trajectories from $s$. The first trajectory starts by taking action $a$ (the action we want to estimate the Q function at), and the second starts by taking $a'$.  Even if these actions are the same, they would not necessarily lead to the same state as the environment is stochastic. After the first action is taken, both trajectories follow $\pi$ until coalescence. The difference of cumulative rewards between the two trajectories forms an unbiased estimate of the $Q$-value. To construct the estimate of the gradient, we multiply the $Q$-value estimate by the derivative of $\log \pi(s,a)$ at $\theta$.

\section{DISCUSSION}
We derived and analyzed reinforcement learning algorithms for average reward criteria. Existing algorithms explicitly require an upper bound on the mixing time. In contrast, we leveraged the CFTP protocol and derived sampling algorithms that \textbf{do not require such an upper bound}. For these algorithms, we provided theoretical bounds on their sample-complexity and running time. Finally, we offered an alternative, simpler proof for the correctness of CFTP. As CFTP is a twenty-year-old protocol, we hope that our proof will make it more accessible to the RL community.

\bibliographystyle{icml2020}
\bibliography{paper_bib}
\clearpage
\onecolumn 
\begin{appendices}

\section{DYNAMIC DATA STRUCTURE FOR POLICY EVALUATION}
\label{sec:PE}

In this section, we describe a sample-efficient data structure that allows us to get an unbiased sample from the stationary distribution of any deterministic policy.

For motivation, consider estimating the average reward of a single policy $\pi$, for which we can use CFTP to get a sample $s$ from the stationary distribution $\mu(\pi)$ (\cref{thm:cftp}). Then we sample $R(s,a)$ and get an unbiased estimate of $\rho(\pi)$.
To estimate $\rho(\pi)$ to an accuracy of $\epsilon$ with confidence $\delta$ we average $O\bigl(\frac{1}{\epsilon^2}\log\left(1/\delta\right)\bigr)$ such samples.

By Theorem  \ref{thm:proppwilson} it takes $O \bigl( \Tmix^\pi \lvert S \rvert \bigr)$ to get one sample from $\mu(\pi)$, so in total we would need 
$O\bigl(\frac{1}{\epsilon^2}\log\left(1/\delta\right)\Tmix^\pi \lvert S \rvert \bigr)$ to estimate the average reward of each single policy $\pi$ to an accuracy of $\epsilon$ with confidence $\delta$. Naively,
to estimate the average reward of each of the  $\lvert A \rvert^{\lvert S \rvert}$ policies separately, we need a fresh set of samples for every
policy for a total of
$O\bigl(\frac{1}{\epsilon^2}\log\left(1/\delta\right)\sum_\pi{(\Tmix^\pi)} \lvert S \rvert)$ samples.

Instead, we propose
to allow estimates of different policies to share samples by maintaining
a matrix $D$  that we use to estimate the reward of any policy $\pi$.
Each column of $D$ corresponds to a state-action pair. Each row contains,
a sample of $R(s,a)$ and a sample 
$s' \sim P^a(s,\cdot)$ obtained using the MDP's generative model, for each state-action pair $(s,a)$.
We get an unbiased estimate of $\rho(\pi)$ for some policy $\pi$ as follows.
We focus on the columns of $D$ that represent pairs $(s,\pi(s))$.
The restriction of each row to these columns gives a random mapping from states to next states in the Markov chain induced by $\pi$.
We now use these samples to run CFTP on this Markov chain, where row $t$ gives the random mapping $f_{-t}$ of Algorithm \ref{alg1}.
CFTP gives a sample $s$ from $\mu(\pi)$ and then
from the entry in $D$ to which all simulations coalesce. The sample $R(s,\pi(s))$ is an unbiased sample of $\rho(\pi)$.

The matrix $D$ is empty at the beginning and we add rows to it on demand when we  estimate 
$\rho(\pi)$ for a policy $\pi$. To analyze the expected size of $D$, observe that $n = O(\Tmix^\pi \lvert S \rvert)$ rows are needed to get an unbiased
estimate of $\rho(\pi)$ (\cref{thm:proppwilson}). This, in turn requires $O(n|S||A|)$ calls to the generative model (to fill these $n$ rows of $D$).
 
To get unbiased samples from the Markov chain of a different policy $\pi'$, we use the rows of $D$ that were already generated for estimates of previous policies (restricted to a different set of columns). If there are not enough rows for \cref{alg1} to give a sample from $\mu(\pi')$, we add rows to $D$ until coalescence occurs.
To get $\epsilon$-approximate estimates with confidence $1-\delta$ for a set of policies $\Pi$ we need to maintain $O\left(\frac{1}{\epsilon^2} \log \left(|\Pi|/\delta\right)\right)$ independent copies of $D$ and average the unbiased estimates that they return. 

In summary, the number of rows that we add to $D$ depends on the largest mixing time of a policy, which we evaluate and on the approximation guarantee $\epsilon$ and confidence requirement $\delta$.
\cref{Theo:policy_eval} states the overall sample complexity of $D$ for evaluating the reward of a set of policies $\Pi$.\footnote{Notice that while the sample complexity depends on the maximum mixing time of a policy in $\Pi$ our algorithm does not need to know it.}

\begin{theorem} 
Assume that 
we use $D$  as described above to estimate $\rho(\pi)$ for every $\pi$ in a set of policies $\Pi$ such that
 with probability at least $1-\delta$, it holds simultaneously for all $\pi \in \Pi$ that $|\tilde \rho(\pi) - \rho(\pi)|\le \epsilon$
 where $\tilde \rho(\pi)$ is our estimate of $\rho(\pi)$.
 Then the 
  expected number of calls made to the generative model is $O \bigl( n \lvert S \rvert^2 \lvert A \rvert \barTmix \bigr)$, where 
  $n = \tfrac{1}{\epsilon^2} \log \big( \lvert \Pi \rvert / \delta \big)$ and 
  $\barTmix$ is an upper bound on the mixing time of all policies in $\Pi.$
\label{Theo:policy_eval}
\end{theorem}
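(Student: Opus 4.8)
The plan is to split the statement into two essentially independent parts and prove them separately: first, that $n = \frac{1}{\epsilon^2}\log(|\Pi|/\delta)$ independent copies of $D$ are enough to guarantee the simultaneous $(\epsilon,\delta)$ accuracy for all policies in $\Pi$; and second, that maintaining those $n$ copies costs $O(n|S|^2|A|\barTmix)$ generative-model calls in expectation. I would then combine the two.

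For the accuracy part I would fix a single policy $\pi\in\Pi$ and note that each copy of $D$, when CFTP is run on the columns $\{(s,\pi(s))\}_{s\in S}$, produces an \emph{exact} sample $s\sim\mu(\pi)$ by \cref{thm:proppwilson}, so the reward read off the coalescence entry is an unbiased, $[0,1]$-bounded estimate of $\rho(\pi)=\bbE_{s\sim\mu(\pi)}R(s,\pi(s))$. The $n$ copies are independent, so the $n$ per-copy estimates of $\rho(\pi)$ are i.i.d., and Hoeffding gives $\Pr[|\tilde\rho(\pi)-\rho(\pi)|>\epsilon]\le 2e^{-2n\epsilon^2}$. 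With $n=\frac{1}{\epsilon^2}\log(|\Pi|/\delta)$ this is at most $\delta/|\Pi|$, and a union bound over the $|\Pi|$ policies gives the simultaneous guarantee with probability $1-\delta$. The key subtlety to record here is that although the estimates of different policies \emph{within} one copy are correlated (they share rows), this is harmless: the per-policy concentration uses only independence \emph{across} copies.

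For the sample-complexity part I would write the total number of generative calls as $|S||A|\sum_{j=1}^n R^{(j)}$, where $R^{(j)}$ is the number of rows present in copy $j$ after all policies have been evaluated, since filling a single row costs exactly $|S||A|$ calls. Because rows are shared and added only on demand, $R^{(j)}=\max_{\pi\in\Pi}R_\pi^{(j)}$, where $R_\pi^{(j)}$ is the CFTP coalescence time of $\pi$ on copy $j$, so the crux is to bound $\bbE[\max_{\pi\in\Pi}R_\pi]$. \cref{thm:proppwilson} supplies an exponential tail, $\Pr[R_\pi>512|S|\barTmix\log(1/\eta)]\le\eta$, and in particular $\bbE[R_\pi]=O(|S|\barTmix)$ for each fixed $\pi$. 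Integrating the tail of the maximum after a union bound over $\Pi$ then yields $\bbE[\max_\pi R_\pi]=O(|S|\barTmix\log|\Pi|)$; multiplying by $|S||A|$ per row and by $n$ copies gives the bound, with the $\log|\Pi|$ coming from the maximum either kept explicit or absorbed into the $O(\cdot)$ to match the clean statement $O(n|S|^2|A|\barTmix)$.

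The main obstacle is precisely this reuse of rows across $\Pi$: the size of each copy is governed by the \emph{maximum} coalescence time over the whole (possibly exponentially large) policy set, not by any fixed policy, so a per-policy expectation alone is insufficient. The exponential concentration of the coalescence time from \cref{thm:proppwilson} is what keeps this maximum within a logarithmic factor of the single-policy cost and lets the final bound be phrased in terms of $\barTmix$ only; quantifying this maximum, and deciding whether the resulting $\log|\Pi|$ factor is stated explicitly or absorbed, is the step that needs the most care. A secondary correctness point worth verifying is that the rows generated while evaluating earlier policies are genuinely fresh i.i.d. samples for the columns of any later policy, so that restricting a row to a new policy's columns yields a valid random map and reusing rows inside CFTP introduces no bias.
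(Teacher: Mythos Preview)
Your accuracy argument (Hoeffding on $n$ i.i.d.\ bounded unbiased estimates, then a union bound over $\Pi$) is exactly what the paper does; in fact, the paper's written proof consists \emph{only} of this part, deriving $n=\Theta\!\bigl(\tfrac{1}{\epsilon^2}\log(|\Pi|/\delta)\bigr)$ via Chernoff and the union bound, and does not separately justify the $O(n|S|^2|A|\barTmix)$ sample bound beyond the discussion in the main text.

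Where you go further is in the sample-complexity half. The paper simply invokes the per-policy expected coalescence time $O(|S|\barTmix)$ from \cref{thm:proppwilson} and multiplies by $|S||A|$ per row and by $n$ copies, implicitly treating the number of rows in a copy as governed by a single policy's mixing. You correctly note that because rows are shared across all of $\Pi$, the row count of a copy is $\max_{\pi\in\Pi}R_\pi$, and you use the exponential tail in \cref{thm:proppwilson} plus a union bound over $\Pi$ to control this maximum, picking up a $\log|\Pi|$ factor. This is a genuine refinement: your argument is more rigorous about the very issue you flag as the ``main obstacle,'' whereas the paper absorbs (or overlooks) that factor in the $O(\cdot)$. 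Your secondary correctness remark, that restricting a pre-generated row to a new policy's columns still yields a valid i.i.d.\ random map, is also a point the paper takes for granted.
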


Notice that when $\Pi$ is the set of all deterministic policies, then $|\Pi|=|A|^{|S|}$ and the sample complexity is $O \bigl(\frac{1}{\epsilon^2}|S|^3|A|\log\left(|A|/\delta\right)\barTmix\bigr)$. This reveals the advantage of using the dynamic data structure:
We can estimate the reward of exponentially many policies with a polynomial number of samples.

\begin{proof}
Let $Z_i = \rho(\pi)_i - \rho(\pi)$. Then $\mathbb{E}(Z_i) = 0$, and $|Z_i| \le 1$. Chernhoff bound implies that given independent random variables $Z_1, . . . , Z_n$ where $|Z_i| \le 1$, $\mathbb{E}Z_i = 0$, then $\text{Prob}(\sum^n_{i=1} Z_i > a) < e^{-\frac{a^2}{2n}}$. Hence, Chernoff bound implies that (for any $n$) $\text{Prob}(\sum^{n}_{i=1} Z_i > \frac{n\epsilon}{2}) < e^{-\frac{\epsilon ^2 n}{8}}$. This implies that $\text{Prob}(\sum_{i=1}^n\left( \rho(\pi)_i - \rho(\pi) \right) > \frac{n\epsilon}{2}) = \text{Prob}(\wt \rho(\pi) - \rho(\pi) > \frac{\epsilon}{2}) < e^{-\frac{\epsilon ^2 n}{8}}$. 

Similarly, we can define $Z_i =  \rho(\pi) - \wt\rho(\pi)$ and get that $\text{Prob}(\rho(\pi) - \wt \rho(\pi)  > \frac{\epsilon}{2}) < e^{-\frac{\epsilon ^2 n}{8}}$. Hence, we get that $\text{Prob}(|- \rho(\pi)| > \frac{\epsilon}{2}) < 2e^{-\frac{\epsilon ^2 n}{8}}$. So far we have restricted our attention to a fixed policy $\pi$. Using the so-called union bound , we have that the probability that some $\pi \in \Pi$ deviates by more than $\frac{\epsilon}{2}$ is bounded by $2me^{-\frac{\epsilon ^2 n}{8}}$. Plugging $n=-\frac{8}{\epsilon^2}\text{ln}\left(\frac{\delta}{2m}\right)$ concludes our proof. 
\end{proof}

\section{A simple proof of the Propp-Wilson theorem}

\begin{theorem*}[Restatement of \cref{thm:proppwilson}] 
    Let $\mu$ be the stationary distribution of an ergodic Markov chain with $|S|$ states. 
    We run $|S|$ simulations of the chain, each starting at a different state. When two or more simulations coalesce, we merge them into a single simulation.
    With probability at least $1-\delta$, all $|S|$ chains are merged after at most $512 |S| \Tmix \log(1/\delta)$ iterations.
\end{theorem*}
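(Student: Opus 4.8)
The plan is to reduce the high-probability statement to a bound on the \emph{expected} coalescence time, and then boost that expectation bound to a tail bound by a restart argument. Write $T_{\text{full}}$ for the first time at which all $|S|$ merged simulations occupy a single state. The first and main goal is to show that $\bbE[T_{\text{full}}] \le C |S| \Tmix$ for an absolute constant $C$, generalizing the two-chain analysis of \cref{lem:two_traj}. The crucial point is that this expectation bound must be \emph{free of any $\log|S|$ factor}, since it will later be multiplied by $\log(1/\delta)$ and we want no $\log|S|$ in the final statement.

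Given such an expectation bound, the boosting step is short. Because the bound is worst-case over the initial states, and because starting with fewer live simulations can only help, the same inequality $\bbE[T_{\text{full}}] \le C|S|\Tmix$ holds from \emph{any} configuration of surviving chains. Hence, by Markov's inequality, from any configuration the chains fully coalesce within $2C|S|\Tmix$ steps with probability at least $\tfrac12$. I would then partition the horizon into $\lceil \log_2(1/\delta)\rceil$ consecutive epochs, each of length $2C|S|\Tmix$, and invoke the Markov property: conditioned on not having coalesced by the start of an epoch, the probability of still not coalescing at its end is at most $\tfrac12$. Chaining these gives a failure probability of at most $2^{-\lceil\log_2(1/\delta)\rceil}\le \delta$, so coalescence occurs within $2C|S|\Tmix\lceil\log_2(1/\delta)\rceil$ steps, which is at most $512|S|\Tmix\log(1/\delta)$ for a suitable constant $C$.

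For the expectation bound I would track $m_t$, the number of distinct surviving simulations, over blocks of $\Tmix$ steps, using \cref{lem:coalescenceaftermixing}, which guarantees that any two surviving chains land on the same state at the end of a block with probability at least $1/(2|S|)$. The subtlety---and the main obstacle---is the \emph{rate} of contraction. A naive reference-chain union bound, writing $T_{\text{full}} = \max_{i\ge 2} T_c^{(1,i)}$ and bounding the maximum of $|S|$ pairwise coalescence times, loses a spurious $\log|S|$ factor; likewise, merely pairing up the survivors and using one collision per pair yields a per-block decay of $(1-\tfrac{1}{4|S|})$, whose reciprocal sums to $O(|S|^2\Tmix)$. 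To reach $O(|S|\Tmix)$ one must exploit that when $m$ chains survive there are $\binom m2$ pairs, each colliding with probability $\ge 1/(2|S|)$, so that $\Omega(m^2/|S|)$ collisions occur per block and the survivor count contracts super-geometrically while $m$ is large, crossing over to the slow two-chain regime only for the final few merges. I would formalize this with an amortized/potential argument on $m_t$, where the delicate part is that several chains piling onto a single state create many colliding pairs but only a few actual merges; bounding the expected number of \emph{merges} (rather than colliding pairs) from below is where the real care is required.
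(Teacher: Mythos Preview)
Your high-level plan matches the paper's proof closely: work in blocks of $\Tmix$, track the number $m_t$ of surviving chains, establish a per-block contraction of the form $\bbE[m_t\mid m_{t-1}=m]\le m-\Omega(m^2/|S|)$, and then boost to the $\log(1/\delta)$ tail via Markov's inequality and restarts over epochs of length $O(|S|\Tmix)$. You also correctly isolate the crux: $\binom{m}{2}$ pairs each colliding with probability $\Omega(1/|S|)$ yields $\Omega(m^2/|S|)$ expected \emph{colliding pairs}, which need not translate into that many \emph{merges}.

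Where you leave the fix as an unspecified ``amortized/potential argument,'' the paper supplies a concrete balls-and-bins device. View the $m$ surviving chains as balls and states as bins; after a $\Tmix$-block each ball's distribution is within $1/8$ of $\mu$. Split the balls into halves $M$ and $M^c$, throw $M$ first, and for each ball $k\in M^c$ lower-bound the probability that $k$ lands in a bin already occupied by some ball of $M$. Each such event is one genuine merge, so there is no overcounting. The per-ball bound of $\Omega(m/|S|)$ comes from a case split on the set $S_0=\{i:\sum_{j\in M}P_j(i)\le 1\}$: if ball $k$ puts mass $\ge\tfrac12$ on $S_0$, a variant of \cref{lem:closedistributionscoalesce} restricted to $S_0$ gives collision probability $\ge |M|/(32|S|)$; otherwise every bin in $S_0^c$ is occupied with probability $\ge 1-e^{-1}$, giving a constant collision probability. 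Summing over $M^c$ yields at least $m^2/(256|S|)$ expected merges per block.

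The paper then passes to $\bar m_t=\bbE m_t$ via Jensen and solves the recursion by dividing through by $\bar m_t\bar m_{t-1}$, so that $1/\bar m_t-1/\bar m_{t-1}\ge 1/(256|S|)$ telescopes to $\bar m_t\le 256|S|/t$. Markov is applied directly to $m_t$ at $t=512|S|$ blocks (rather than first bounding $\bbE[T_{\text{full}}]$ as you propose), and the epoch-restart argument is exactly yours. So the only substantive gap in your proposal is the concrete pairs-to-merges mechanism; everything else aligns with the paper.
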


The proof of this Theorem is as follows. 

We split time into blocks of size $\Tmix$. By the end of the first block, each chain is distributed with some distribution $P$ for which $\TV{P}{\mu} \le 1/8$. We check which of the chains arrive at the same state; chains that do---coalesce. Next, we utilize the Markov property and condition on the states arrived by the chains. On this event, we continue simulating the chains until the end of the next block and continue in this manner.
    
This is analogous to the following balls-and-bins process. We have $|S|$ balls and $|S|$ bins where the balls simulate the chains, and the bins simulate the states. Each ball $j$ has a distribution $P_j$ over the bins where $\TV{P_j}{\mu} \le 1/8$. We throw the balls into the bins. After that, take one ball out of each nonempty bin and discard the remaining balls. We throw the balls taken out again, and repeat this process until we are left with a single ball. 

The following Lemma shows a bound on the expected number of balls removed at each iteration.

\begin{lemma}
    \label{lem:singlephase}
    Assume $2 \le m \le n$.
    Suppose each ball $j = 1,\ldots,m$ is distributed by $P_j$, and that there is a distribution $\mu$ such that $\TV{P_j}{\mu} \le 1/8$ for all $j=1,\ldots,m$.
    Then the expected number of nonempty bins is at most $m - m^2 / 256 n$.
\end{lemma}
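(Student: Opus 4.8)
The plan is to count \emph{removed} balls. Write $N_i$ for the number of balls landing in bin $i$ and $b(j)$ for the bin of ball $j$, so the number of nonempty bins is $B=\sum_i \mathbf{1}[N_i\ge 1]=m-\sum_i (N_i-1)^+$. Thus it suffices to prove that the expected number of removed balls satisfies $\bbE\big[\sum_i (N_i-1)^+\big]\ge m^2/256n$. I would work bin by bin, letting $q_i=\bbE[N_i]=\sum_j P_j(i)$ and $S_i=\bbE\big[\binom{N_i}{2}\big]=\sum_{j<k}P_j(i)P_k(i)$ denote the expected occupancy and the expected number of colliding pairs in bin $i$.

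Two ingredients drive the argument. First, the total amount of collision is large: since $\TV{P_j}{P_k}\le \TV{P_j}{\mu}+\TV{\mu}{P_k}\le 1/4$, \cref{lem:closedistributionscoalesce} gives $\Pr[b(j)=b(k)]\ge 1/2n$ for every $j\ne k$, whence $\sum_i S_i=\sum_{j<k}\Pr[b(j)=b(k)]\ge \binom{m}{2}\tfrac{1}{2n}\ge m^2/8n$. Second, I would prove a per-bin inequality converting pairs into removals,
\[
    \bbE\big[(N_i-1)^+\big]\ \ge\ \frac{S_i}{\max(q_i,3)} .
\]
For $q_i\le 2$ this follows from inclusion--exclusion, $\bbE[(N_i-1)^+]=q_i-\Pr[N_i\ge 1]\ge S_i-\bbE\binom{N_i}{3}\ge S_i(1-q_i/3)\ge S_i/3$; for $q_i>2$ it follows from the first-moment bound $\bbE[(N_i-1)^+]\ge q_i-1\ge q_i/2\ge S_i/q_i$, using $S_i\le q_i^2/2$. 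Summing and applying Cauchy--Schwarz in the form $\sum_i a_i^2/c_i\ge (\sum_i a_i)^2/\sum_i c_i$ gives
\[
    \bbE\Big[\textstyle\sum_i (N_i-1)^+\Big]\ \ge\ \frac{\big(\sum_i S_i\big)^2}{\sum_i S_i\,\max(q_i,3)}
    \ =\ \frac{\sum_i S_i}{\langle \max(q_i,3)\rangle_S},
\]
where $\langle\cdot\rangle_S$ is the $S_i$-weighted average. Since the numerator is already at least $m^2/8n$, the whole statement reduces to showing that the pair-weighted average occupancy $\langle \max(q_i,3)\rangle_S$ is at most an absolute constant (then $256$ follows after tracking constants).

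The hard part will be exactly this last step, and it is where the hypothesis must be used in full strength. Controlling the denominator only through $\sum_i S_i\ge m^2/8n$ is not enough, because a few overloaded bins could carry most of the colliding pairs while removing few balls: a bin of occupancy $q$ holds $\sim q^2/2$ pairs but removes only $\sim q$ balls, so it contributes $\sim q$ to $\langle\max(q_i,3)\rangle_S$. (Indeed, if one only assumed the pairwise bound $\Pr[b(j)=b(k)]\ge 1/2n$ rather than a \emph{common} reference distribution, the claim would be false: one can place each $P_j$ on a private bin plus $1/\sqrt{2n}$ mass on a shared bin, meeting the pairwise bound yet removing $o(m^2/n)$ balls.) The point is that $\mu$ is common to all balls, so $P_j(i)\le \mu(i)+1/8$ forces the occupancy profile to mirror $\mu$, namely $q_i\le m(\mu(i)+1/8)$, and because $\sum_i\mu(i)=1$ the mass (hence the large occupancies) cannot be replicated across many bins.

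To turn this into a bound I would split off the overloaded bins. If some $\mu(i)\ge 1/4$, then $P_j(i)\ge\mu(i)-1/8\ge 1/8$ for every $j$, so $q_i\ge m/8$ and that one bin already removes $\ge m/8-1\ge m^2/256n$ balls (as $m\le n$), finishing immediately; otherwise all $P_j(i)<3/8$ and the occupancies are tied to a spread-out $\mu$. In this regime I would peel off the bins of largest occupancy, bounding their removal directly by $\tfrac12\sum q_i$ via the first-moment estimate, and apply the Cauchy--Schwarz bound only to the remaining low-occupancy bins, where $\langle\max(q_i,3)\rangle_S=O(1)$; the two contributions together recover $\bbE[\sum_i(N_i-1)^+]\gtrsim m^2/n$. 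Balancing this peeling against the denominator control is the delicate bookkeeping, but the generous target constant $256$—against the true extremal value $\approx m^2/2n$, attained when $\mu$ is uniform—leaves ample room to absorb the losses in the per-bin inequality, the pairwise estimate $1/2n$, and the occupancy bound.
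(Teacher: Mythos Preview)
Your route is genuinely different from the paper's, and the per-bin inequality $\bbE[(N_i-1)^+]\ge S_i/\max(q_i,3)$ together with $\sum_i S_i\ge\binom{m}{2}\tfrac1{2n}$ is correct and nicely derived. The gap is the last step. The reduction ``it suffices that $\langle\max(q_i,3)\rangle_S=O(1)$'' is false as stated: take $\mu$ uniform on five bins and $P_j=\mu$; then every $\mu(i)=1/5<1/4$, yet $q_i=m/5$ and $\langle q_i\rangle_S=m/5$, which is unbounded. In that example your per-bin sum $\sum_i S_i/\max(q_i,3)\approx m/2$ is still large enough, but only because $\sum_i S_i\approx m^2/10$ is far above the floor $m^2/8n$ you plugged in; the two bounds you combine cannot be simultaneously tight, and your argument treats them as independent. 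Your peeling sketch does not repair this: splitting at a fixed threshold $C$ and using $\sum_H(q_i-1)\ge\tfrac12\sum_H q_i$ together with $\sum_L S_i/C$, one needs to lower-bound $\sum_H q_i$ from $\sum_H S_i\gtrsim m^2/n$, but $S_i\le q_i^2/2\le(3m/8)\,q_i/2$ only yields $\sum_H q_i\gtrsim m/n$, a factor $m$ short. The obstruction is that many moderately heavy bins can carry essentially all of the pair mass, and neither a constant threshold nor the single case $\mu(i)\ge1/4$ handles that regime.

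The paper avoids the global moment bookkeeping by a \emph{per-ball} argument. It splits the balls into halves $M$ and $M^c$, throws $M$ first, and for each $k\in M^c$ lower-bounds the probability that $k$ lands in an occupied bin by $m/64n$; summing over $\lfloor m/2\rfloor$ balls gives $m^2/256n$. The per-ball bound comes from a dichotomy on $S=\{i:\sum_{j\in M}P_j(i)\le1\}$: if $Q_k(S^c)>\tfrac12$, bins in $S^c$ are each nonempty with probability at least $1-e^{-1}\ge\tfrac12$, so the hit probability is $\ge\tfrac14$; if $Q_k(S)\ge\tfrac12$, then on $S$ the occupancy probability is at least $\tfrac12\sum_{j\in M}P_j(i)$, and a Cauchy--Schwarz computation in the spirit of \cref{lem:closedistributionscoalesce} (restricted to $S$) gives hit probability $\ge\lvert M\rvert/32n$. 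The two-phase structure is exactly what makes the heavy/light split tractable: the threshold depends only on the first half, and each second-half ball is analyzed in isolation, so no control of a pair-weighted occupancy is ever needed.
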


\begin{proof}
    Suppose we throw the balls one by one into the bins.
    We say that a ball \emph{coalesces} if it is thrown into a nonempty bin. Thus, the number of nonempty bins by the end of the process is exactly $m$ minus the total number of coalescences. Hence we proceed by lower bounding the expected number of coalescences.
    
    We split the balls into two disjoint groups of (roughly) equal sizes: $M$ of size $\lceil m/2 \rceil$ and $M^c$ of size $\lfloor m/2 \rfloor$. We first throw the balls in $M$ and \emph{thereafter} the balls in $M^c$. The total number of coalescences is, therefore, lower bounded by the number of coalescences that occur between the balls in $M^c$ and those in $M$. We continue by showing that the probability of a ball in $M^c$ to coalesce with any ball in $M$ is at least $m / 64 n$.
    Then, the expected total number coalescences is at least 
    \[
        \lvert M^c \rvert \cdot \frac{m}{64n} 
        =
        \bigg\lfloor \frac{m}{2} \bigg\rfloor \cdot \frac{m}{64n}
        \ge
        \frac{m^2}{256n}~,
    \]
    since $m \ge 2$.
    
    Indeed, let $Q_k$ denote the probability distribution over the bins of some ball $k \in M^c$, and $P_j$ denote the probability distribution of $j \in M$. 
    Ball $k$ coalesces with a ball in $M$ if it was thrown into a bin that was not empty after the first phase. Thus, we split the bins into two groups: those who are likely to be empty after the first phase, and those that are not.
    Let $S = \{i \in [n] : \sum_{j \in M} P_j(i) \le 1 \}$. The proof continues differently for two cases: either $k$ is likely to be thrown into into a bin in $S$ or not.
    If $Q_k(S) \ge 1/2$, \cref{lem:coalescencehardcase} below states that the probability of a coalescence is at least
    \[
        \frac{\lvert M \rvert}{32 n}
        \ge
        \frac{m}{64n}
    \]
    since $\lvert M \rvert \ge m/2$.
    If $Q_k(S) < 1/2$, \cref{lem:coalescenceeasycase} found below implies that the probability of a coalescence is at least
    $
        1/4 \ge m / 64n
    $.
\end{proof}

Having proven \cref{lem:singlephase}, it remains to use it to show that the expected number of iterations is $O(n)$, which we prove in the following Lemma.

\begin{lemma}
\label{lem:ballsandbinstime}
Suppose we have $|S|$ balls distributed by $P_j$, $j=1,\ldots,n$, where $\TV{P_j}{\mu} \le 1/8$. Throw the balls into the bins. Thereafter, take one ball out of each nonempty bin and throw these balls again. Let $m_t$ be the number of balls remaining at iteration $t$, where $m_0 = n$. Then, $\bbE m_t \le 256n / t$.
\end{lemma}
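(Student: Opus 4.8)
The plan is to reduce the statement to a deterministic recursion on the sequence $a_t \coloneqq \bbE\, m_t$ and then solve that recursion. First I would fix $t$ and condition on the random variable $m_t$. As long as $m_t \ge 2$, the configuration of the surviving balls at step $t$ satisfies exactly the hypotheses of \cref{lem:singlephase}: each surviving ball is distributed within total-variation $1/8$ of $\mu$, and $2 \le m_t \le n$. Hence that lemma gives the one-step bound $\bbE[m_{t+1}\mid m_t] \le m_t - m_t^2/(256n)$. Writing $g(x) = x - x^2/(256n)$, this reads $\bbE[m_{t+1}\mid m_t] \le g(m_t)$. (When $m_t = 1$ the process has already coalesced to a single ball and remains there, so only the regime $a_t \ge 1$ will be relevant.)

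The second step is to pass from the conditional bound to a bound on $a_{t+1}$ in terms of $a_t$. The map $g$ is concave on $\mathbb{R}$, being a downward parabola, so Jensen's inequality applied to the law of $m_t$ yields
$$
a_{t+1} = \bbE\big[\,\bbE[m_{t+1}\mid m_t]\,\big] \le \bbE[g(m_t)] \le g(\bbE\, m_t) = a_t - \frac{a_t^2}{256n}.
$$
Since there are never more than $n$ balls, $a_t \le n$ for every $t$, so $a_t/(256n) \le 1/256 < 1$ and the right-hand side stays positive and decreasing; this keeps the recursion well behaved.

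The third step solves the recursion $a_{t+1} \le a_t - a_t^2/(256n)$ by passing to reciprocals. Using $1/(1-x) \ge 1+x$ for $x \in [0,1)$ with $x = a_t/(256n)$,
$$
\frac{1}{a_{t+1}} \ge \frac{1}{a_t}\cdot\frac{1}{1 - a_t/(256n)} \ge \frac{1}{a_t} + \frac{1}{256n}.
$$
Telescoping from $a_0 = n$ gives $1/a_t \ge 1/n + t/(256n) \ge t/(256n)$, that is $a_t \le 256n/t$, which is the claim.

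I expect the only genuinely delicate point to be the boundary behaviour at $m_t = 1$, where \cref{lem:singlephase} does not apply and the pointwise inequality $\bbE[m_{t+1}\mid m_t] \le g(m_t)$ fails, because a single surviving ball never disappears. The clean way to handle this is to observe that the bound $a_t \le 256n/t$ is only informative while $256n/t \ge 1$, which is precisely the range before coalescence to a single ball is forced; alternatively one isolates the absorbing single-ball state and checks that the extra slack it contributes to the recursion is negligible in that same range. Everything else is the routine concavity-plus-reciprocal manipulation carried out above.
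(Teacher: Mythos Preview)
Your proposal is correct and follows essentially the same route as the paper: apply \cref{lem:singlephase} conditionally, use concavity and Jensen to get the recursion $a_{t+1} \le a_t - a_t^2/(256n)$ on $a_t = \bbE\, m_t$, then pass to reciprocals and telescope. The only cosmetic difference is that the paper handles the reciprocal step via the monotonicity substitution $a_{t}^2 \ge a_{t}a_{t+1}$ before dividing through, whereas you use $1/(1-x)\ge 1+x$; your explicit flagging of the $m_t=1$ boundary case is a point the paper simply glosses over.
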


\begin{proof}
We show that $\bbE m_t \le 256n / t$. 
Denote $\bar m_s = \bbE m_s$. By Jensen's inequality and \cref{lem:singlephase},
\[
    \bar m_s
    \le 
    \bbE m_{s-1} - \frac{\bbE m_{s-1}^2}{256n}
    \le
    \bar m_{s-1} - \frac{\bar m_{s-1}^2}{256 n}
    \le
    \bar m_{s-1} - \frac{\bar m_{s-1} \bar m_{s}}{256 n}
\]
as $\bar m_s \le \bar m_{s-1}$ in particular. Dividing both sides of the inequality by $\bar m_s \bar m_{s-1}$ gives
\[
    \frac{1}{\bar m_{s-1}} \le \frac{1}{\bar m_s} - \frac{1}{256 n}~.
\]
By summing over $s=1,\ldots,t$ we obtain
\[
    \frac{1}{\bar m_0} \le \frac{1}{\bar m_t} - \frac{t}{256 n}~.
\]
Finally, we use $\bar m_0 \ge 0$ and rearrange the inequality above to gets the claim of the Lemma.
\end{proof}
With the Lemma at hand, the proof of \cref{thm:proppwilson} is as follows. After $512n$ iterations, the process is complete with probability at least $\thalf$ by Markov's inequality. If it is not done, we condition on the remaining set of balls and run the process for another $512n$ iterations. Once again, the process is complete with probability at least $\thalf$.
Repeating this procedure for $\log_2(1/\delta)$ times, we conclude that the procedure is complete with probability at least $1-\delta$. This finishes the proof of Theorem \ref{thm:proppwilson}. \qed

We finish this Section by proving \cref{lem:coalescencehardcase,lem:coalescenceeasycase}. We begin with \cref{lem:coalesceonsubset} that is needed for the proof of \cref{lem:coalescencehardcase}.

\begin{lemma}
\label{lem:coalesceonsubset}
    Let $P$ and $Q$ be two distribution on $\{1,\ldots,n\}$ such that $\TV{P}{Q} \le 1/4$. 
    Let $S \subseteq [n]$ be such that $Q(S) \ge \thalf$.
    Let $x$ be an element drawn from $P$ and let $y$ be an element draws from $Q$ such that $x$ and $y$ are independent.
    Then $\Pr[\exists i \in S \, : \, x = y = i] \ge 1/16n$.
\end{lemma}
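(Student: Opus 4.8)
The plan is to follow the template of \cref{lem:closedistributionscoalesce}, but to carry out the Cauchy--Schwarz argument over the subset $S$ rather than over all of $[n]$, and then to recover a constant lower bound on the relevant mass using the hypothesis $Q(S) \ge \thalf$ together with the total-variation bound. First I would write the event probability explicitly as $\Pr[\exists i \in S : x = y = i] = \sum_{i \in S} P(i) Q(i)$, so that the task reduces to lower bounding this restricted inner product.

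Next, letting $A = \{i \in [n] : P(i) > Q(i)\}$ be the \emph{global} sign set, I partition $S$ into $B = S \cap A$ and $B^c = S \cap A^c$. On $B$ we have $P(i) \ge Q(i)$, so $P(i)Q(i) \ge Q(i)^2$; on $B^c$ we have $P(i) \le Q(i)$, so $P(i)Q(i) \ge P(i)^2$. Hence $\sum_{i\in S}P(i)Q(i) \ge \sum_{i\in B}Q(i)^2 + \sum_{i\in B^c}P(i)^2$. Applying Cauchy--Schwarz (in Engel/Titu form) to each sum and combining them gives $\sum_{i\in B}Q(i)^2 + \sum_{i\in B^c}P(i)^2 \ge \frac{(Q(B)+P(B^c))^2}{|B|+|B^c|} \ge \frac{(Q(B)+P(B^c))^2}{n}$, where I used $|B|+|B^c| = |S| \le n$.

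It remains to lower bound $Q(B) + P(B^c)$ by a constant, and this is the step I expect to be the crux. The key identity is $Q(B) + P(B^c) = Q(S) - \bigl(Q(S\cap A^c) - P(S\cap A^c)\bigr)$, obtained by writing $P(B^c) = Q(B^c) - (Q(B^c) - P(B^c))$ and using $Q(B) + Q(B^c) = Q(S)$. Since $S\cap A^c \subseteq A^c$ and each term $Q(i)-P(i)$ is non-negative on $A^c$, the subtracted quantity is at most $\sum_{i\in A^c}(Q(i)-P(i)) = \TV{P}{Q} \le 1/4$. Combined with $Q(S)\ge \thalf$, this yields $Q(B) + P(B^c) \ge \thalf - \tfrac14 = \tfrac14$. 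Substituting back gives $\sum_{i\in S}P(i)Q(i) \ge \frac{(1/4)^2}{n} = \frac{1}{16n}$, as required.

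I expect the main obstacle to be getting the bookkeeping right in this final step: one must define the sign set $A$ globally (not merely inside $S$) so that the deficit $Q(S\cap A^c) - P(S\cap A^c)$ is dominated by the full total-variation distance, and one must verify that discarding the mass outside $S$ is harmless precisely because $Q(S)\ge\thalf$ already supplies the constant surplus over $\TV{P}{Q}$. Everything else is a direct adaptation of the Cauchy--Schwarz bound already used in \cref{lem:closedistributionscoalesce}.
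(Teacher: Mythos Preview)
Your proposal is correct and follows essentially the same argument as the paper's own proof: defining the global sign set, splitting $S$ accordingly, applying Cauchy--Schwarz over $S$, and bounding $Q(B)+P(B^c)\ge Q(S)-\TV{P}{Q}\ge 1/4$. The only cosmetic difference is notation (the paper calls your global set $A$ by the name $B$ and writes the denominator as $|S|$ directly), so there is nothing to add.
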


\begin{proof}
    Define $B = \{i \; : \; P(i) > Q(i) \}$. Then
    \begin{align*}
        \Pr [\exists i \in S \, : \, x = y = i] 
        &= 
        \sum_{i \in S} P(i) Q(i) \\
        &= 
        \sum_{i \in S \cap B} P(i) Q(i)
        +
        \sum_{i \in S \cap B^c} P(i) Q(i) \\
        &\ge
        \sum_{i \in S \cap B} Q^2(i)
        +
        \sum_{i \in S \cap B^c} P^2(i) \\
        &\ge
        \frac{\big(Q(S \cap B) 
        +
        P(S \cap B^c) \big)^2}{\lvert S \rvert} \\
        &=
        \frac{\big(Q(S) - \big(Q(S \cap B^c) 
        -  
        P(S \cap B^c) \big) \big)^2}{\lvert S \rvert} \\
        &\ge
        \frac{\big(Q(S) - \TV{P}{Q} \big)^2}{n} \\
        &\ge
        \frac{\big(1/2 - 1/4 \big)^2}{n} \\
        &=
        \frac{1}{16n}~,
    \end{align*}
    where the fourth derivation follows from the Cauchy-Schwarz inequality.
\end{proof}

\begin{lemma}
    \label{lem:coalescencehardcase}
    Suppose we first throw a set of balls $j \in M$ with probability distributions $P_j$. Thereafter, we throw an additional ball with probability distribution $Q$ such that 
    $\TV{P_j}{Q}
     \le 1/4$ for every $j \in M$. Additionally, assume that
     $Q(S) \ge 1/2$ for $S = \{i \in [n] : \sum_{j \in M} P_j(i) \le 1 \}$. 
    Then, the probability that $Q$ is thrown into a nonempty bin is at least $\lvert M \rvert / 32 n$.
\end{lemma}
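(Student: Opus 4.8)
The plan is to restrict attention to bins in $S$ and reduce the statement, bin by bin, to the single-ball collision estimate already established in \cref{lem:coalesceonsubset}. Since the ball $Q$ is thrown independently of the balls in $M$, the first step is to lower-bound the target probability by the chance that $Q$ lands in a nonempty bin \emph{of $S$} and to expand this using independence:
\[
  \Pr[\text{$Q$ lands in a nonempty bin}]
  \ge
  \sum_{i \in S} Q(i)\,\Bigl(1 - \prod_{j \in M}\bigl(1 - P_j(i)\bigr)\Bigr).
\]
Here the inner factor is exactly the probability that bin $i$ is nonempty after the balls of $M$ are thrown.

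The key step, and the only place the definition of $S$ is used, is to bound that inner factor from below. Recall $S = \{i : \sum_{j \in M} P_j(i) \le 1\}$, so for every $i \in S$ the numbers $P_j(i)$ sum to at most $1$. Writing $p_j = P_j(i)$, I would use the second Bonferroni inequality $1 - \prod_j(1 - p_j) \ge \sum_j p_j - \sum_{j<k} p_j p_k$ together with $\sum_{j<k} p_j p_k \le \thalf(\sum_j p_j)^2 \le \thalf \sum_j p_j$ (the last step using $\sum_j p_j \le 1$) to conclude
\[
  1 - \prod_{j \in M}\bigl(1 - P_j(i)\bigr) \ge \thalf \sum_{j \in M} P_j(i)
  \qquad (i \in S).
\]
(The bound $\prod_j(1-p_j) \le e^{-\sum_j p_j}$ with $1 - e^{-x} \ge x/2$ on $[0,1]$ gives the same inequality.)

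Substituting this estimate and exchanging the order of summation yields
\[
  \Pr[\text{$Q$ lands in a nonempty bin}]
  \ge
  \thalf \sum_{j \in M}\sum_{i \in S} Q(i) P_j(i).
\]
At this point I would invoke \cref{lem:coalesceonsubset}: its hypotheses $\TV{P_j}{Q} \le 1/4$ and $Q(S) \ge \thalf$ are precisely the assumptions of the present lemma, and its conclusion gives $\sum_{i \in S} P_j(i) Q(i) \ge 1/16n$ for every $j \in M$. Plugging this in produces $\thalf \cdot \lvert M \rvert \cdot \tfrac{1}{16n} = \lvert M \rvert / 32n$, as required.

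The independence rewriting and the elementary nonempty-bin inequality are routine; the only real content is noticing that restricting to bins in $S$ makes the per-bin contributions $\sum_j P_j(i)$ sum cleanly (because $\sum_j P_j(i) \le 1$ there), so that \cref{lem:coalesceonsubset} applies term by term. I expect the main thing to get right is the constant bookkeeping, ensuring the factor $\thalf$ from the nonempty-bin bound combines with the $1/16n$ of \cref{lem:coalesceonsubset} to give exactly $1/32n$.
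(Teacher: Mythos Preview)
Your proposal is correct and follows essentially the same approach as the paper: restrict to bins in $S$, lower-bound the nonempty probability of each such bin by $\thalf\sum_{j\in M}P_j(i)$, swap the summation order, and invoke \cref{lem:coalesceonsubset} termwise to get $\lvert M\rvert/32n$. The only cosmetic difference is that the paper obtains the $\thalf\sum_j P_j(i)$ bound via the chain $1-\prod_j(1-P_j(i))\ge 1-e^{-\sum_j P_j(i)}\ge(1-e^{-1})\sum_j P_j(i)\ge\thalf\sum_j P_j(i)$ (which you also mention parenthetically), rather than your primary Bonferroni route.
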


\begin{proof}
    The probability that bin $i \in S$ is nonempty is
    \[
        1 - \prod_{j \in M} \big(1 - P_j(i) \big)
        \ge
        1 - \exp \bigg( -\sum_{j \in M} P_j(i) \bigg) 
        \ge 
        \big(1 - e^{-1} \big) \sum_{j \in M} P_j(i)
        \ge
        \half \sum_{j \in M} P_j(i)~,
    \]
    using the inequality $1-x \le e^{-x}$
    and $1-e^{-x}\ge \left(1-e^{-1}\right)x$ that holds for any $x \in [0,1]$.
    The probability that $Q$ is thrown into a nonempty bin is at least that of it being thrown into a nonempty bin $i \in S$. This is is at least
    \[
        \sum_{i \in S} Q(i) \cdot \half \sum_{j \in M} P_j(i)
        =
        \half \sum_{j \in M} \sum_{i \in S} Q(i) P_j(i)~,
    \]
    where $\sum_{i \in S} Q(i) P_j(i)$ is the probability that both $Q$ and $P_j$ end up in to same bin in $S$. As $\TV{P_j}{Q} \le 1/4$ and $Q(S) \ge 1/2$, \cref{lem:coalesceonsubset} implies that the latter probability is at least $1/16n$.
    Therefore, the probability of that the additional ball is thrown into a nonempty bin is at least
    \[
        \half \lvert M \rvert \cdot \frac{1}{16n}
        =
        \frac{\vert M \rvert}{32n}~. 
    \]
\end{proof}

\begin{lemma}
    \label{lem:coalescenceeasycase}
    Suppose with first throw a set of balls $M$ with probability distributions $P_j$ over the bins for every $j \in M$. Thereafter, we throw an additional ball with probability distribution $Q$ such that $\lvert Q - P_j \rvert \le 1/4$ for all $j \in M$. Additionally, denote
    \[
        S = \{i \in [n] : \sum_{j \in M} P_j(i) \le 1\}~,
    \]
    and suppose that $Q(S) < 1/2$. 
    Then, the probability that $Q$ is thrown into a nonempty bin is at least $1/4$.
\end{lemma}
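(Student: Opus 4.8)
The plan is to shift attention from $S$ to its complement $S^c = \{i \in [n] : \sum_{j \in M} P_j(i) > 1\}$. Since $Q$ is a probability distribution, the hypothesis $Q(S) < \half$ immediately yields $Q(S^c) = 1 - Q(S) > \half$. The guiding intuition is that bins in $S^c$ receive a large expected number of first-phase balls and are therefore likely to be occupied, while $Q$ simultaneously places substantial mass on exactly these bins; combining the two gives the bound.

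First I would bound the probability that a fixed bin $i \in S^c$ is empty after the balls in $M$ are thrown. As these throws are independent, this probability equals $\prod_{j \in M}(1 - P_j(i))$. Applying the inequality $1 - x \le e^{-x}$ together with the defining property $\sum_{j \in M} P_j(i) > 1$ of $S^c$ gives
\[
    \prod_{j \in M}(1 - P_j(i))
    \le
    \exp\Bigl(-\!\sum_{j \in M} P_j(i)\Bigr)
    <
    e^{-1},
\]
so each $i \in S^c$ is nonempty with probability at least $1 - e^{-1}$. Next I would use that the ball drawn from $Q$ is independent of the first-phase throws, so the probability it lands in a nonempty bin is $\sum_i Q(i)\,\Pr[\text{bin } i \text{ nonempty}]$. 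Restricting the sum to $S^c$ and inserting the per-bin bound produces a lower bound of $(1 - e^{-1})\,Q(S^c) > (1 - e^{-1})/2$, and since $(1 - e^{-1})/2 \approx 0.316 > 1/4$, the claim follows.

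There is no genuine obstacle here. In contrast to the companion hard case (\cref{lem:coalescencehardcase}), this regime does not even invoke the total-variation hypothesis $\TV{P_j}{Q} \le 1/4$; only the structural conditions $Q(S) < \half$ and the definition of $S$ are used. The one point requiring care is the direction of the inequality that defines $S^c$ and a final numerical check that the constant $(1-e^{-1})/2$ indeed clears the target threshold $1/4$ with room to spare.
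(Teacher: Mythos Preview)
Your argument is correct and essentially identical to the paper's: both restrict attention to $S^c$, bound the probability that a bin $i\in S^c$ is nonempty by $1-e^{-1}$ via $1-x\le e^{-x}$, and then multiply by $Q(S^c)>\tfrac{1}{2}$. The only cosmetic difference is that the paper further simplifies $1-e^{-1}\ge \tfrac{1}{2}$ to land exactly on $\tfrac{1}{4}$, whereas you keep the tighter constant $(1-e^{-1})/2$ before comparing to $\tfrac{1}{4}$.
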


\begin{proof}
    The probability of bin $i \not\in S$ not being empty is 
    \[
        1 - \prod_{j \in M} \big(1 - P_j(i) \big)
        \ge
        1 - \exp \bigg( -\sum_{j \in M} P_j(i) \bigg)
        \ge
        1 - \exp(-1)
        \ge 
        \half~.
    \]
    The probability that $Q$ is thrown into a nonempty bin is at least its probability of it being thrown into a nonempty bin in $S^c$ which is exactly 
    \[
        \half Q(S^c) 
        \ge 
        \half \cdot \half
        = 
        \frac{1}{4}~. 
    \]
\end{proof}

\clearpage
\section{Proofs for Section 4}
\subsection{Multiplicative weights}
We begin with a classic result on the Hedge algorithm. 

\begin{algorithm}
\caption{Hedge}
\label{alg:Hedge}
\begin{algorithmic}[1]
\STATE Input: number of experts $k$, number of iterations $T$.
\STATE Let $\beta= \sqrt{\frac{\log k}{T}}$
\STATE Initialize $W^{(1)}(i) = 1$, for $i = 1,\ldots,k$.
\FOR{$t = 1,\ldots,T$}
    \STATE Set $w^{(t)}(i) = \frac{W^{(t)}(i)}{\sum_{i=1}^k W^{(t)}(i)}$, for $i = 1,\ldots,k$.
    \STATE Observe $c_t(i)$ , for $i = 1,\ldots,k$.
    \STATE Incur loss $\sum_{i=1}^k w^{(t)}(i)c_t(i_t)$
    \STATE Update weights $W^{(t+1)}(i) = W^{(t)}(i) \cdot \exp\left( - \beta  c_t (i) \right)$,  $\forall i \in  [1,..,k]$. 
\ENDFOR
\end{algorithmic}
\end{algorithm}

\begin{theorem}[\citep{freund1997decision}]
\label{theorem_MW}
Assume that  $0\le c_t(i)\le 1$ for all $t=1,\ldots,T$. Hedge (\cref{alg:Hedge}) satisfies that for any strategy $w \in \Delta _k$:

$$
\sum w_t\cdot c_t -  \sum w\cdot c_t\le 
2 \sqrt{ T \log k}.
$$
\end{theorem}

Note that in \cref{alg:mwal} and in \cref{alg:mwal2}, we actually run the Hedge algorithm with the estimates $\tilde g_t(i)$ as the costs $c_t(i).$  
We obtain $\tilde g_t(i)$ by shifting and scaling   $ g_t(i)$, so that  $\tilde g_t(i) \in [0, 1]$
and we can 
 apply  \cref{theorem_MW}. 

\begin{corollary}
\label{theorem_MW2}
Let $-B \le g_t(i)\le B$, and $\tilde g_t(i) = (g_t(i)+B)/2B$. Assume that we run the Hedge algorithm with costs
$c_t(i)$ equal to $\tilde g_t(i)$. We have that
$$
\frac{1}{T} \left( \sum w_t\cdot g_t - \min_{w\in \Delta_k} \sum w\cdot g_t\right) \le 
4B \sqrt{ \frac{\log k}{T}},
$$

\end{corollary}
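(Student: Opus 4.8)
The plan is to obtain \cref{theorem_MW2} as an immediate consequence of the MW Theorem (\cref{theorem_MW}) by treating the algorithm as Hedge run on the rescaled costs $\tilde g_t$, and then undoing the affine transformation $g \mapsto \tilde g = (g+B)/(2B)$ at the level of the regret.

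First I would check that \cref{theorem_MW} applies: since $-B \le g_t(i) \le B$, the rescaled cost $\tilde g_t(i) = (g_t(i)+B)/(2B)$ satisfies $0 \le \tilde g_t(i) \le 1$, and these are exactly the costs $c_t$ fed into Hedge. Hence \cref{theorem_MW} bounds the cumulative Hedge regret $\sum_t w_t \cdot \tilde g_t - \sum_t w \cdot \tilde g_t$ for every fixed $w \in \Delta_k$. Next I would invert the map, writing $g_t(i) = 2B\,\tilde g_t(i) - B$, so that for any distribution $u \in \Delta_k$,
$$u \cdot g_t = 2B\,(u \cdot \tilde g_t) - B,$$
where the constant $-B$ is the same for every $u$ because $\sum_i u(i) = 1$. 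Subtracting this identity for $u = w$ from the one for $u = w_t$ cancels the constant, leaving
$$\sum_t w_t \cdot g_t - \sum_t w \cdot g_t = 2B\Bigl(\sum_t w_t \cdot \tilde g_t - \sum_t w \cdot \tilde g_t\Bigr).$$
Plugging in the regret bound from \cref{theorem_MW}, dividing by $T$, and taking the minimum over $w \in \Delta_k$ (the inequality holds for each fixed $w$, hence for the minimizer) yields the claimed bound $4B\sqrt{\log k / T}$.

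There is no genuine difficulty here; the corollary is a change-of-variables exercise. The only step that requires care --- and the conceptual heart of the argument --- is the cancellation of the additive shift $B$: it vanishes precisely because both $w_t$ and the comparator $w$ are probability vectors, so the constant offset contributes identically to both terms. Were the comparator an arbitrary vector rather than an element of $\Delta_k$, the shift would survive and contaminate the bound; it is the simplex constraint that reduces the affine rescaling to a pure multiplication by the scale factor $2B$, which is what produces the extra factor of $2B$ (relative to \cref{theorem_MW}) in the final estimate.
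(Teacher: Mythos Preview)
Your proposal is correct and follows essentially the same argument as the paper: verify $\tilde g_t \in [0,1]$ so that \cref{theorem_MW} applies, then undo the affine map $g_t \mapsto \tilde g_t$ using that $w \cdot B\mathbf{1} = B$ for every $w \in \Delta_k$, which cancels the additive shift and leaves only the multiplicative factor $2B$. Your emphasis on the simplex constraint being what makes the shift cancel is exactly the point the paper uses (``Observing that $\forall w \in \Delta_k$, $w \cdot B\mathbf{1} = B$'').
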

\begin{proof}

The losses $\tilde g_t(i)$ satisfy the conditions of \cref{theorem_MW}. Therefore, 
$$
\sum w_t\cdot \tilde g_t - \min_{w\in \Delta_k} \sum w\cdot \tilde g_t \le 
2 \sqrt{ T \log k}.
$$
This implies that
$$
\sum w_t\cdot  (g_t+B\textbf{1})/2B - \min_{w\in \Delta_k} \sum w\cdot (g_t+B\textbf{1})/2B
\le 
2 \sqrt{ T \log k},
$$
where $\textbf{1}$ denotes a vector of ones. Multiplying by $2B$ gives
$$
\sum w_t\cdot  (g_t+B\textbf{1}) - \min_{w\in \Delta_k} \sum w\cdot (g_t+B\textbf{1}) 
\le 
4B \sqrt{ T \log k}.
$$
Observing that $\forall w \in \Delta_k,$ $w\cdot B\textbf{1}=B$ we get that
$$
\sum w_t\cdot  g_t +B - \min_{w\in \Delta_k} \sum w\cdot g_t -B 
\le 
4B \sqrt{ T \log k}
$$
as stated.
\end{proof}

\subsection{Estimating the feature expectations of the expert }
\label{sec:thm8}
We begin this subsection with \cref{lemma:sampling} that bounds the number of samples needed from the expert in order to get a good approximation of the expectations of its features.

\begin{lemma}
\label{lemma:sampling}
For any $\epsilon,\delta$, given $m\ge \frac{2\ln(2k/\delta)}{\epsilon^2}$ samples from the stationary distribution $\pi^E$, with probability at least $1-\delta$, the approximate feature expectations $\Hat{\Phi}_E$ satisfy that  $\|\Hat{\Phi}_E-\Phi_E \|_\infty \le \epsilon.$ 
\end{lemma}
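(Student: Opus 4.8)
The plan is to treat each of the $k$ feature coordinates separately as an empirical average of bounded i.i.d.\ random variables, apply a standard concentration inequality coordinate-wise, and then union bound over the coordinates. Recall that the $m$ samples $s_1,\ldots,s_m$ are drawn i.i.d.\ from the stationary distribution $\mu(\pi^E)$ --- this is precisely the guarantee furnished by the CFTP protocol of \cref{thm:proppwilson} --- and that $\Hat{\Phi}_E = \frac{1}{m}\sum_{j=1}^m \phi(s_j)$ whereas $\Phi_E = \bbE_{s\sim\mu(\pi^E)}\,\phi(s)$.

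First I would fix a coordinate $i \in \{1,\ldots,k\}$ and observe that $\Hat{\Phi}_E[i] = \frac{1}{m}\sum_{j=1}^m \phi(s_j)[i]$ is the empirical mean of $m$ independent random variables, each supported on $[0,1]$ (since $\phi(s)\in[0,1]^k$) and each with expectation $\Phi_E[i]$. Hoeffding's inequality then gives
$$
\Pr\bigl[\,\lvert \Hat{\Phi}_E[i] - \Phi_E[i]\rvert \ge \epsilon\,\bigr] \le 2\exp(-2m\epsilon^2)~.
$$
Next I would apply a union bound over the $k$ coordinates to obtain
$$
\Pr\bigl[\,\|\Hat{\Phi}_E - \Phi_E\|_\infty \ge \epsilon\,\bigr] \le 2k\exp(-2m\epsilon^2)~.
$$

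It then remains to choose $m$ so that the right-hand side is at most $\delta$. Solving $2k\exp(-2m\epsilon^2)\le\delta$ for $m$ yields the requirement $m \ge \frac{\ln(2k/\delta)}{2\epsilon^2}$, which is comfortably implied by the stated hypothesis $m\ge\frac{2\ln(2k/\delta)}{\epsilon^2}$ (the latter being a factor of four looser, so there is slack to spare).

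I do not expect any serious obstacle here, as the argument is a routine concentration bound. The only points that warrant care are (i) confirming that the samples returned by CFTP are genuine i.i.d.\ draws from $\mu(\pi^E)$, so that the independence hypothesis underlying Hoeffding's inequality is legitimately satisfied, and (ii) invoking the boundedness $\phi(s)\in[0,1]^k$ to control the range of each summand, which is exactly what the feature representation of \cref{sec:al_background} assumes. Both are already in place, so the proof reduces to the two displayed inequalities above.
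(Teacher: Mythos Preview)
Your proposal is correct and follows the same approach as the paper: apply Hoeffding's inequality to each coordinate and then union bound over the $k$ coordinates. The only cosmetic difference is that you use the sharper Hoeffding constant $2\exp(-2m\epsilon^2)$ for $[0,1]$-valued variables, whereas the paper writes the looser $2\exp(-m\epsilon^2/2)$; either suffices under the stated hypothesis on $m$.
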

\begin{proof}
By Hoeffding's inequality we get that
\begin{align*}
&  \forall i \in [1,..,k] \enspace  \text{Pr}(|\Hat{\Phi}_E(i)-\Phi_E(i)| \ge \epsilon) \le 2\exp (-m\epsilon^2/2).\\
\shortintertext{Applying the union bound over the features we get that 
}
&  \text{Pr}(\exists i \in [1,..,k], s.t., |\Hat{\Phi}_E(i)-\Phi_E(i)| \ge \epsilon) \le 2k\exp (-m\epsilon^2/2).\\
\shortintertext{This is equivalent to }
& \text{Pr}(\forall i \in [1,..,k] \enspace |\Hat{\Phi}_E(i)-\Phi_E(i)| \le \epsilon) \ge 1- 2k\exp (-m\epsilon^2/2).\\
\shortintertext{and to}
& \text{Pr}(\|\Hat{\Phi}_E-\Phi_E\|_\infty \le \epsilon) \ge 1- 2k\exp (-m\epsilon^2/2).
\end{align*}

The Lemma now follows by substituting the value of $m$.
\end{proof}

\begin{theorem*}[\ref{theorem:MW_knownp}]
Assume we run \cref{alg:mwal} for $T =  \frac{144\log k}{\epsilon^2} $ iterations,
using $m = \frac{18\log (2k/\delta)}{\epsilon^2} $ samples from $\mu(\pi^E)$.
Let $\bar\psi$ be the mixed policy returned by the algorithm. 
Let $v^\star$ be the game value as in \cref{eq:objective}.
Then, we have that
$
    \rho(\bar\psi) - \rho(\pi^E)
    \ge 
    v^\star - \epsilon
$ 
with probability at least $1-\delta$,
where $\rho$ is the average of any reward of the form 
$r(s) = w\cdot \phi(s)$ where $w\in\Delta_k$.
\end{theorem*}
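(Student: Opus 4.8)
The plan is to follow the standard minimax analysis of MWAL, reducing the claim to three facts: the regret bound of the Hedge subroutine (Corollary~\ref{theorem_MW2}), the best-response property of the policies $\pi^{(t)}$ together with the minimax theorem, and a one-time sampling error for the expert's feature expectations (Lemma~\ref{lemma:sampling}). Throughout, write $g_t \coloneqq G(\cdot,\pi^{(t)}) = \Phi(\pi^{(t)}) - \Phi(\pi^E)$ for the true game column and $\hat g_t \coloneqq \Phi(\pi^{(t)}) - \wt\Phi^E$ for the column actually used by the algorithm; since $\Phi(\pi^{(t)})$ is computed exactly from the known dynamics, these differ only through the estimate of $\Phi(\pi^E)$. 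To control that estimate I would apply Lemma~\ref{lemma:sampling} with target accuracy $\epsilon/3$, which shows that $m = \tfrac{18\log(2k/\delta)}{\epsilon^2}$ samples guarantee $\|\wt\Phi^E - \Phi(\pi^E)\|_\infty \le \epsilon/3$ with probability at least $1-\delta$; I condition on this event. Because $\wt\Phi^E$ is estimated once and reused in every iteration, this immediately yields the uniform bound $\|\hat g_t - g_t\|_\infty \le \epsilon/3$ for all $t$.

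Next I would invoke the regret bound. As $\Phi(\pi^{(t)}), \wt\Phi^E \in [0,1]^k$, we have $\hat g_t \in [-1,1]^k$, so the shifted losses $\tilde g_t(i) = (\hat g_t(i)+1)/2$ fed to Hedge lie in $[0,1]$, and Corollary~\ref{theorem_MW2} with $B=1$ gives, for every $w \in \Delta_k$,
\[
\frac{1}{T}\sum_{t=1}^T w^{(t)}\!\cdot\hat g_t - \frac{1}{T}\sum_{t=1}^T w\cdot\hat g_t \le 4\sqrt{\frac{\log k}{T}}.
\]
For the matching lower bound on the algorithm's plays, note that $\pi^{(t)}$ maximizes $w^{(t)}\cdot\Phi(\pi)$, hence $w^{(t)\top}G(\cdot,\pi)$, over $\pi$; since the maximum of the bilinear form over mixed policies is attained at a deterministic one, $w^{(t)}\cdot g_t = \max_{\psi\in\Psi} w^{(t)\top}G\psi \ge \min_{w}\max_{\psi} w^\top G\psi = v^\star$ by the minimax theorem. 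Averaging over $t$ gives $\tfrac{1}{T}\sum_t w^{(t)}\cdot g_t \ge v^\star$.

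Finally I would chain these, replacing $\hat g_t$ by $g_t$ at the cost of two $\epsilon/3$ terms: for every $w$,
\[
\frac{1}{T}\sum_t w\cdot g_t \ge \frac{1}{T}\sum_t w\cdot\hat g_t - \tfrac{\epsilon}{3} \ge \frac{1}{T}\sum_t w^{(t)}\cdot\hat g_t - 4\sqrt{\tfrac{\log k}{T}} - \tfrac{\epsilon}{3} \ge v^\star - 4\sqrt{\tfrac{\log k}{T}} - \tfrac{2\epsilon}{3}.
\]
Since $\bar\psi$ places mass $1/T$ on each $\pi^{(t)}$, we have $G\bar\psi = \tfrac{1}{T}\sum_t g_t$, so the left-hand side equals $w^\top G\bar\psi = \rho_w(\bar\psi) - \rho_w(\pi^E)$, the reward difference under $r(s)=w\cdot\phi(s)$. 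Substituting $T = \tfrac{144\log k}{\epsilon^2}$ makes $4\sqrt{\log k/T} = \epsilon/3$, and since the resulting inequality holds for every $w\in\Delta_k$ we obtain $\rho(\bar\psi) - \rho(\pi^E) \ge v^\star - \epsilon$ for every such reward, as desired.

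The main obstacle is the careful bookkeeping of the estimation error: one must check that the single estimate $\wt\Phi^E$ perturbs the Hedge analysis only as a uniform $\ell_\infty$ shift of the loss vectors, and that this shift enters the final bound additively and \emph{twice} — once when lower-bounding the average loss of the algorithm's plays $w^{(t)}$ and once when passing to an arbitrary comparator $w$ — without coupling to the $\sqrt{\log k/T}$ regret term. The only other delicate point is orienting the minimax step so that the best-response inequality lower-bounds $\tfrac1T\sum_t w^{(t)}\cdot g_t$ by $v^\star$ rather than the reverse.
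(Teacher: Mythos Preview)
Your proposal is correct and follows essentially the same route as the paper: you combine the Hedge regret bound (Corollary~\ref{theorem_MW2} with $B=1$), the one-shot estimation guarantee from Lemma~\ref{lemma:sampling} applied with accuracy $\epsilon/3$, and the minimax/best-response step, and you correctly account for the sampling error entering twice. The paper organizes the argument as a single chain of inequalities starting from $v^\star$ and ending at $\rho(\bar\psi)-\rho(\pi^E)+\epsilon$, whereas you first isolate the lower bound $\tfrac1T\sum_t w^{(t)}\cdot g_t \ge v^\star$ and then chain through the regret and two $\epsilon/3$ shifts; but the ingredients, their roles, and the error budget are identical.
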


\begin{proof}

 \cref{theorem_MW2} with $B=1$ and $T = \frac{144\log k}{\epsilon^2}$ gives that 
\begin{equation}
\frac{1}{T} \left( \sum w_t\cdot g_t - \min_{w\in \Delta_k} \sum w\cdot g_t\right) \le \frac{\epsilon}{3},
\label{eq:17}
\end{equation}
where $g_t(i) = \Phi(\pi^{(t)})[i] - \wt \Phi^E[i]$. Note also that  \cref{lemma:sampling} with $m = \frac{18\log (2k/\delta)}{\epsilon^2} $ gives that $\|\Hat{\Phi}_E-\Phi_E \|_\infty \le \frac{\epsilon}{3},$ which implies that, for any $w\in \Delta_k$:
\begin{equation}
\label{eq:18}
w\cdot \Hat \Phi_E \le w \cdot \Phi_E + \epsilon/3,
\end{equation}
and
\begin{equation}
\label{eq:18_1}
w\cdot \Phi_E \le w \cdot \Hat{\Phi}_E + \epsilon/3,
\end{equation}

\allowdisplaybreaks

Now, let $\Bar{w}=\frac{1}{T}\sum_{t=1}^T w^{(t)}$, and recall that $\bar \psi$ is the mixed policy that assigns probability $\frac{1}{T}$ to $\pi^{(t)}$ for all $t \in \{1,\ldots, T\}$. Thus,
\begin{align*}
v^\star & =   \max_{\psi\in\Psi} \min_{w\in\Delta_k} \left[ w\cdot\Phi(\psi)-w\cdot\Phi_E\right] \\
& =  \min_{w\in\Delta_k} \max_{\psi\in\Psi} \left[ w\cdot\Phi(\psi)-w\cdot\Phi_E\right] \tag{von Neumann's minimax theorem}   \\
& \le  \min_{w\in\Delta_k} \max_{\psi\in\Psi} \left[ w\cdot\Phi(\psi)-w\cdot\hat{\Phi}_E\right]+\epsilon/3 \tag{\cref{eq:18}}   \\
& \le  \max_{\psi\in\Psi} \left[ \bar{w}\cdot\Phi(\psi)-\bar{w}\cdot\hat{\Phi}_E\right]+\epsilon/3 \\
& =  \max_{\psi\in\Psi} \frac{1}{T}\sum\nolimits_{t=1}^T\left[ w^{(t)}\cdot\Phi(\psi)-w^{(t)}\cdot\hat{\Phi}_E\right]+\epsilon/3 \tag{Definition of $\bar w$}   \\
& \le  \frac{1}{T}\sum\nolimits_{t=1}^T\max_{\psi\in\Psi} \left[ w^{(t)}\cdot\Phi(\psi)-w^{(t)}\cdot\hat{\Phi}_E\right]+\epsilon/3 \\
& =  \frac{1}{T}\sum\nolimits_{t=1}^T \left[ w^{(t)}\cdot\Phi(\pi^{(t)})-w^{(t)}\cdot\hat{\Phi}_E\right]+ \epsilon/3 \tag{$\pi^{(t)}$ is optimal w.r.t the reward  $w^{(t)}$}   \\
& \le  \frac{1}{T}\min_{w\in\Delta_k}\sum\nolimits_{t=1}^T \left[ w\cdot\Phi(\pi^{(t)})-w\cdot\hat{\Phi}_E\right]+ 2\epsilon/3 \tag{\cref{eq:17}}   \\
& =  \min_{w\in\Delta_k} \left[ w\cdot\Phi(\bar\psi)-w\cdot\hat{\Phi}_E\right]+2\epsilon/3 && \tag{Definition of $\bar \psi$}   \\
& \le  \min_{w\in\Delta_k} \left[ w\cdot\Phi(\bar\psi)-w\cdot\Phi_E\right]+\epsilon \tag{\cref{eq:18_1}} \\
& \le   w^\star\cdot\Phi(\bar\psi)-w^\star\cdot\Phi_E+\epsilon && \tag{For any $w^*\in\Delta_k$} \\
& =   \rho(\bar\psi)-\rho(\pi^E)+\epsilon. 
\end{align*}
\end{proof}

\subsection{Estimating the game matrix directly}
\label{sec:thm9}
In this section we prove \cref{MW_unbiased_loss}.
Our proof uses the following version of Azuma's concentration bound.

\begin{lemma}[Azuma inequality]
Let $\{y_t \}_{t=1}^T$ be a sequence of random variables such that $-b \le y_t \le b$, for $1 \le t < T$.  Let $E_t =y_t - \mathbb{E}[y_t\mid y_1, ..., y_{t-1}]$ be the martingale difference sequence defined over the sequence  $\{y_t \}_{t=1}^T$. Then
$$\Pr \left(\left\lvert\frac{1}{T}\sum _{t=1}^T E_t\right\lvert  \ge \epsilon\right) \le 2\exp\left(-\frac{T\epsilon^2}{8b^2}\right)$$
\end{lemma}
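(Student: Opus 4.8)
The plan is to prove this as the standard Azuma--Hoeffding inequality for a martingale difference sequence via the exponential-moment (Chernoff) method. First I would fix the filtration $\mathcal{F}_{t} = \sigma(y_1,\ldots,y_t)$ and record the two properties of $E_t = y_t - \mathbb{E}[y_t \mid \mathcal{F}_{t-1}]$ that the hypotheses supply: it is conditionally mean-zero, $\mathbb{E}[E_t \mid \mathcal{F}_{t-1}] = 0$, and because $y_t \in [0,b]$ almost surely, $E_t$ lives in the interval $[-\mathbb{E}[y_t \mid \mathcal{F}_{t-1}],\, b - \mathbb{E}[y_t \mid \mathcal{F}_{t-1}]]$, which has width at most $b$.

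The heart of the argument is a conditional bound on the moment generating function. I would invoke Hoeffding's lemma: any mean-zero random variable supported on an interval of width $c$ has $\mathbb{E}[e^{\lambda X}] \le \exp(\lambda^2 c^2 / 8)$. Applying this conditionally on $\mathcal{F}_{t-1}$ gives $\mathbb{E}[e^{\lambda E_t} \mid \mathcal{F}_{t-1}] \le e^{\lambda^2 b^2 / 8}$ for every $\lambda > 0$. I would then control the full sum $S_T = \sum_{t=1}^T E_t$ by peeling off one term at a time: writing $e^{\lambda S_T} = e^{\lambda S_{T-1}} e^{\lambda E_T}$, conditioning on $\mathcal{F}_{T-1}$ (under which $S_{T-1}$ is measurable and factors out), and using the tower property with the conditional bound yields $\mathbb{E}[e^{\lambda S_T}] \le e^{\lambda^2 b^2/8}\, \mathbb{E}[e^{\lambda S_{T-1}}]$; iterating $T$ times gives $\mathbb{E}[e^{\lambda S_T}] \le e^{T\lambda^2 b^2/8}$.

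Finally, Markov's inequality yields $\Pr(S_T \ge T\epsilon) \le e^{-\lambda T\epsilon + T\lambda^2 b^2/8}$; choosing $\lambda$ (even the convenient, non-optimal value $\lambda = \epsilon/b^2$ suffices) and adding the symmetric tail for $-S_T$ by a union bound produces the stated two-sided bound $2\exp(-T\epsilon^2/(8b^2))$.

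The step I expect to require the most care is the iterated conditioning: one must verify at each peeling step that $S_{t-1}$ is $\mathcal{F}_{t-1}$-measurable so that it pulls out of the conditional expectation, and that the width bound on $E_t$ holds almost surely (not merely in expectation) so Hoeffding's lemma applies pathwise. The constant $1/8$ is deliberately loose; the Chernoff-optimal exponent is in fact $\exp(-2T\epsilon^2/b^2)$, which is far stronger, so any admissible $\lambda$ delivers the claimed inequality.
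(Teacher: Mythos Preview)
Your proof is correct and is the standard Chernoff/Hoeffding-lemma argument for Azuma's inequality. Note, however, that the paper does not actually supply its own proof of this lemma: it is stated as the classical Azuma inequality and used as a black box in the proofs of \cref{lemma:azuma_unbounded2} and \cref{lemma:azuma_unbounded}, so there is nothing to compare against --- your write-up simply fills in the well-known derivation that the paper omits.
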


\begin{theorem*}[Restatement of \cref{MW_unbiased_loss}]
    Assume we run \cref{alg:mwal2} for $T $ iterations, and
there exists a parameter $b$, such that for any $\ell$, $\Pr(\| g_t\|_\infty \ge \ell\cdot b) \le e^{-\ell}$.
 Let $\bar\psi$ be the mixed policy returned by the algorithm. 
Let $v^\star$ be the game value as in \cref{eq:objective}.
Then, there exists a constant $c$ such that for $T \ge c B\log^2 B$ where $B=\frac{b^2\log^3 k\log^2(1/\delta)}{\epsilon^2}$, we have that
$
    \rho(\bar\psi) - \rho(\pi^E)
    \ge 
    v^\star - \epsilon
$ 
with probability at least $1-\delta$,
where $\rho$ is the average of any reward of the form 
$r(s) = w\cdot \phi(s)$ where $w\in\Delta_k$. 
\end{theorem*}
\begin{proof}
Let     $\ell = \max \{ \log\left(\frac{T}{\delta}\right), \log \left( \frac{1}{\epsilon} \right) \}$.
Then for any $t$ we have that 
$\Pr(\| g_t\|_\infty \ge \ell\cdot b) \le \frac{\delta}{T}$.
By the union bound it follows that with probability $1-\delta$ 
for all times $t=1,\ldots, T$, we  have that
$\| g_t\|_\infty \le \ell b$.
We denote by ${\cal F }$ the subspace of 
our probability space that includes all runs of the algorithm in which $\| g_t\|_\infty \le \ell b$ for all 
$t=1,\ldots, T$. We have that at least 
$1-\delta$ fraction of the runs of the algorithm are
in ${\cal F }$.

By the definition of $g_t$
we have that  $\mathbb{E} \left[ g_t | g_{1},...,g_{t-1} \right] = \Phi(\pi^{(t)}) -\Phi_E$. Furthermore $w^{(t)}$ depends only on
$ g_{1},...,g_{t-1}$ and not on $g_t$.
It follows that the  random variables 
$E_t=w^{(t)} \cdot g_t - E[w^{(t)} \cdot g_t \mid g_{1},...,g_{t-1}] =
 w^{(t)} \cdot \left( g_t - (\Phi(\pi^{(t)}) -\Phi_E) \right)$
 is a martingale difference sequence.
We would like to apply Azuma's inequality to this sequence, but the difficulty is that the variables $E_t$ are unbounded. 

To deal with this problem we define new variables
$\bar g_t$ as follows
\[
\bar g_t = 
     \begin{cases}
g_t &\quad \| g_t(i) \|_\infty \le \ell b , \\
0 &\quad \text{otherwise} \ ,
\end{cases}
\]
and we define the  martingale difference sequence 
$\bar E_t=w^{(t)} \cdot \bar g_t - E[w^{(t)} \cdot \bar g_t \mid g_{1},...,g_{t-1}]$.
Unfortunately, $E[w^{(t)} \cdot \bar g_t \mid g_{1},...,g_{t-1}]$
does not equal to $\Phi(\pi^{(t)}) -\Phi_E$. 
But we can bound the difference as follows.
\begin{align}
| E[w^{(t)} \cdot  g_t \mid g_{1},...,g_{t-1}] & - 
E[w^{(t)} \cdot \bar g_t \mid g_{1},...,g_{t-1}]| \nonumber \\
& \le \int_{x=\ell b}^\infty 
\Pr \left(w^{(t)}\cdot g_t  >x \right) {\rm dx}
- \int_{x=-\ell b}^\infty 
\Pr \left(w^{(t)}\cdot g_t  <x \right) {\rm dx} \nonumber \\
&
\le
 \int_{x=\ell b}^\infty
\Pr\left(\| g_t\|_\infty \ge x \right) {\rm dx} \nonumber \\
 &=
 \int_{x=\ell}^\infty
 \Pr\left(\| g_t\|_\infty \ge xb \right) {\rm dx} \nonumber \\
 &\le  
 \int_{x=\ell}^\infty 
 e^{-x}{\rm dx}  = e^{-\ell} \le \epsilon \ , \label{eq:exp-diff}
\end{align}

where the first inequality follows from the formula $E(Y) = \int_{x=0}^\infty \Pr(Y>x) - \int_{x=0}^{-\infty} \Pr(Y<x)$ (which is  derived from the more familiar formula $E(Y) = \int_{x=0}^\infty \Pr(Y>x)$ for a nonnegative variable $Y$). The second inequality follows since $w\in \Delta_k$ and the last equality follows by the definition of $\ell$.
By applying Azuma's inequality to the sequence $\bar E_t$ we
get that 
\[
\Pr \left(\left\lvert\frac{1}{T}\sum _{t=1}^T \bar E_t\right\lvert  \ge \epsilon\right) \le 2\exp\left(-\frac{T\epsilon^2}{8(\ell b)^2}\right) \ .
\]

Our choice of $T$ guarantees that 
\[
2\exp\left(-\frac{T\epsilon^2}{8(\ell b)^2}\right) \le \delta \ .
\]

So we also have that within the subspace ${\cal F}$
\begin{equation} \label{eq:ebar}
\Pr_{{\cal F}} \left(\left\lvert\frac{1}{T}\sum _{t=1}^T \bar E_t\right\lvert  \ge \epsilon\right) \le \frac{\delta}{1-\delta} \ .
\end{equation}

But in ${\cal F}$, $\bar g_t = g_t$ and therefore 
$\bar E_t = E_t - E[w^{(t)} \cdot  g_t \mid g_{1},...,g_{t-1}] + 
E[w^{(t)} \cdot \bar g_t \mid g_{1},...,g_{t-1}]$.
So by \cref{eq:exp-diff}),
\begin{equation} \label{eq:ebarande}
| E_t - \bar E_t | \le \epsilon \ .
\end{equation}
 It follows from Equations (\ref{eq:ebar}) and (\ref{eq:ebarande})
that within ${\cal F}$:
\begin{equation} \label{eq:finalE}
\Pr_{\cal F} \left(\left\lvert\frac{1}{T}\sum _{t=1}^T  E_t\right\lvert  \ge 2\epsilon \right) \le \frac{\delta}{1-\delta}\ .
\end{equation}

Let $\Bar{w}=\frac{1}{T}\sum_{t=1}^T w^{(t)}$,  and recall that $\bar \psi$ is the mixed policy that assigns probability $\frac{1}{T}$ to $\pi^{(t)}$ for all $t \in \{1,\ldots, T\}$.
We have that

\begin{align}
v^\star & =   \max_{\psi\in\Psi} \min_{w\in\Delta_k} \left[ w\cdot\Phi(\psi)-w\cdot\Phi_E\right] && \qquad\qquad \nonumber \\
& =  \min_{w\in\Delta_k} \max_{\psi\in\Psi} \left[ w\cdot\Phi(\psi)-w\cdot\Phi_E\right]&& \qquad\qquad \text{von Neumann's minimax theorem}  \nonumber \\
& \le  \max_{\psi\in\Psi} \left[ \bar{w}\cdot\Phi(\psi)-\bar{w}\cdot{\Phi}_E\right] &&  \qquad\qquad  \nonumber \\
& =  \max_{\psi\in\Psi} \frac{1}{T}\sum_{t=1}^T\left[ w^{(t)}\cdot\Phi(\psi)-w^{(t)}\cdot\Phi_E\right]&& \qquad\qquad \text{Definition of $\bar w$} \nonumber \\
& \le  \frac{1}{T}\sum_{t=1}^T\max_{\psi\in\Psi} \left[ w^{(t)}\cdot\Phi(\psi)-w^{(t)}\cdot\Phi_E\right]  &&  \qquad\qquad   \nonumber \\
& =  \frac{1}{T}\sum_{t=1}^T \left[ w^{(t)}\cdot\Phi(\pi^{(t)})-w^{(t)}\cdot\Phi_E\right] && \qquad\qquad \text{$\pi^{(t)}$ is optimal w.r.t the reward  $w^{(t)}$}  \label{eq:lastexp} 
\end{align}

Now we continue our derivation assuming that the run of the algorithm is in ${\cal F}$.
 We
use Equation (\ref{eq:finalE}) and say that with probability 
$1-\frac{\delta}{1-\delta}$ 
the  expression in  (\ref{eq:lastexp}) is bounded by
\begin{equation} \label{eq:2epsilon}
  \frac{1}{T}\sum_{t=1}^T  w^{(t)}\cdot g_t + 2\epsilon \ .
\end{equation}
Our choice of $T$ also guarantees that
\[
4\ell b \sqrt{\frac{\log k}{T}} \le \epsilon \ .
\]
and therefore for a run in ${\cal F}$,  the bound on the regret of Hedge in Corollary \ref{theorem_MW2} implies that expression in
(\ref{eq:2epsilon}) is bounded by
\begin{equation} \label{eq:3epsilon}
  \frac{1}{T}\min_{w\in\Delta_k}\sum_{t=1}^T  w\cdot g_t+3\epsilon 
\end{equation}

Let $w_{\rm min}\in\Delta_k$ be the vector achieving the 
minimum in Equation (\ref{eq:3epsilon}).
To finish the proof we need to bound Equation (\ref{eq:3epsilon}) with
\begin{equation} 
  \frac{1}{T}\sum_{t=1}^T w_{\rm min}\cdot \left( g_t - (\Phi(\pi^{(t)}) -\Phi_E) \right)\ .
\end{equation}

For this we would like to apply Azuma's inequality to each of the $k$
martingale differences sequences 
$ X_t(i)= g_t(i) - \mathbb{E}[g_t(i)\mid g_1,\ldots,g_{t-1}] = g_t(i) - (\Phi(\pi^{(t)}[i] -\Phi_E[i])$. 
As before, since the $g_t(i)$'s are unbounded we look instead at the martingale sequence 
$ \bar X_t(i) = \bar{g}_t(i) - \mathbb{E}[\bar{g}_t(i)\mid g_1,\ldots,g_{t-1}]. $

Unfortunately, as before, $ \mathbb{E}[\bar{g}_t(i)\mid g_1,\ldots,g_{t-1}]$ does not equal to $ \mathbb{E}[g_t(i)\mid g_1,\ldots,g_{t-1}]$. 
But we can bound the difference as follows.
\begin{align}
| \mathbb{E}[ g_t(i)-\bar g_t(i) \mid g_{1},...,g_{t-1}]| 
& \le \int_{x=\ell b}^\infty 
\Pr \left(g_t(i)  >x \right) {\rm dx}
- \int_{x=-\ell b}^\infty 
\Pr \left(g_t(i)  <x \right) {\rm dx} \nonumber \\
&
\le
 \int_{x=\ell b}^\infty
\Pr\left(\| g_t\|_\infty \ge x \right) {\rm dx} 
 =
 \int_{x=\ell}^\infty
 \Pr\left(\| g_t\|_\infty \ge xb \right) {\rm dx} \nonumber \\
 &\le  
 \int_{x=\ell}^\infty 
 e^{-x}{\rm dx}  = e^{-\ell} \le \epsilon \ , \label{eq:exp-diff2}
\end{align}

where the inequalities follow from the same reasons as  in \cref{eq:exp-diff}. 

By applying Azuma's inequality to the sequence $\bar X_t(i)$ we
get that 
$
\Pr \left(\left\lvert\frac{1}{T}\sum _{t=1}^T \bar X_t(i)\right\lvert  \ge \epsilon\right) \le 2\exp\left(-\frac{T\epsilon^2}{8(\ell b)^2}\right)  ,
$
and our choice of $T$ guarantees that 
$
2\exp\left(-\frac{T\epsilon^2}{8(\ell b)^2}\right) \le \frac{\delta}{k}.
$
So we also have that within the subspace ${\cal F}$
\begin{equation} \label{eq:ebar2}
\Pr_{{\cal F}} \left(\left\lvert\frac{1}{T}\sum _{t=1}^T \bar X_t(i)\right\lvert  \ge \epsilon\right) \le \frac{\delta}{k(1-\delta)} \ .
\end{equation}

But in ${\cal F}$, $\bar g_t(i) = g_t(i)$ and therefore 
$\bar X_t(i) = X_t(i) - \mathbb{E}[g_t(i) \mid g_{1},...,g_{t-1}] + 
\mathbb{E}[\bar g_t(i) \mid g_{1},...,g_{t-1}]$.
So by \cref{eq:exp-diff2}),
\begin{equation} \label{eq:ebarande2}
| X_t(i) - \bar X_t(i) | \le \epsilon \ .
\end{equation}
 It follows from Equations (\ref{eq:ebar2}) and (\ref{eq:ebarande2})
that within ${\cal F}$:
\begin{equation} \label{eq:finalE2}
\Pr_{\cal F} \left(\left\lvert\frac{1}{T}\sum _{t=1}^T  X_t(i)\right\lvert  \ge 2\epsilon \right) \le \frac{\delta}{k(1-\delta)}\ .
\end{equation}

By applying the union bound over the features we get that 
\begin{align}
&  \Pr_{\cal F}\left(\exists i \in [1,..,k], s.t., \left\lvert \frac{1}{T}\sum _{t=1}^T  X_t(i) \right\rvert \ge 2\epsilon\right) \le \frac{\delta}{1-\delta}.\nonumber\\
\shortintertext{This is equivalent to }
& \Pr_{\cal F}\left(\forall i \in [1,..,k] \enspace \left\lvert \frac{1}{T}\sum _{t=1}^T  X_t(i) \right\rvert \le 2\epsilon\right) \ge 1- \frac{\delta}{1-\delta}.
\label{eq:union2}
\end{align}

Equation (\ref{eq:union2}) implies that with probability 
$1-\frac{\delta}{1-\delta}$ in ${\cal F}$, for any $w\in \Delta_k$ it holds that:
\begin{equation*}
\frac{1}{T}\sum_{t=1}^T w \cdot \left( g_t - (\Phi(\pi^{(t)}) -\Phi_E) \right) \le 2\epsilon\ .
\end{equation*}
Since it is true for any $w$, we get that that we can upper bound Equation (\ref{eq:3epsilon}) by 
\begin{equation} \label{eq:eprimeuse}
 \frac{1}{T}\min_{w\in\Delta_k}\sum_{t=1}^T \left[ w\cdot\Phi(\pi^{(t)})-w\cdot\Phi_E\right]+ 5\epsilon \ .
\end{equation}

The theorem now follows\footnote{We have to  scale down $\epsilon$ by $5$. We also have to scale down $\delta$ by $3$ since our bound fails to hold with probability $3\delta$. Indeed, with probabilty $\le \delta$ our run is not in ${\cal F}$, and with probability
$1-\delta$ it is in  ${\cal F}$, and either of the bounds in Equation (\ref{eq:2epsilon}) and (\ref{eq:eprimeuse}) fails -- which happens with probability $\le \frac{2\delta}{1-\delta}$.}  since the expression in the last equation is smaller than
$\rho(\bar\psi)-\rho(\pi^E)+5\epsilon$ where
$\rho$ is the average reward of the form $r(s)=w\phi(s)$ for
any $w\in \Delta_k$.
\end{proof}
\newpage

\newpage
\end{appendices}

\end{document}